\definecolor{highlightcolor}{RGB}{238, 242, 250} 
\renewcommand{\paragraph}[1]{\vspace{1.25mm}\noindent\textbf{#1}}
\theoremstyle{plain}
\newtheorem{theorem}{Theorem}[section]
\newtheorem{corollary}[theorem]{Corollary}
\theoremstyle{definition}
\newtheorem{definition}[theorem]{Definition}
\theoremstyle{remark}
\icmltitlerunning{Graph Condensation via Expanding Window Matching}
\begin{document}

\twocolumn[
\icmltitle{Navigating Complexity: Toward Lossless Graph Condensation via Expanding Window Matching}



\icmlsetsymbol{equal}{*}

\begin{icmlauthorlist}
\icmlauthor{Yuchen Zhang}{equal,yyy}
\icmlauthor{Tianle Zhang}{equal,yyy}
\icmlauthor{Kai Wang$^\dag$}{yyy}
\icmlauthor{Ziyao Guo}{yyy}
\icmlauthor{Yuxuan Liang}{comp}
\icmlauthor{Xavier Bresson}{yyy}
\icmlauthor{Wei Jin}{sch}
\icmlauthor{Yang You}{yyy}
\end{icmlauthorlist}

\icmlaffiliation{yyy}{National University of Singapore}
\icmlaffiliation{sch}{Emory University}
\icmlaffiliation{comp}{Hong Kong University of Science and Technology (Guangzhou)}

\icmlcorrespondingauthor{Wei Jin}{wei.jin@emory.edu}
\icmlcorrespondingauthor{Yang You}{youy@comp.nus.edu.sg}

\icmlkeywords{Machine Learning, ICML}

\vskip 0.3in
]



\printAffiliationsAndNotice{\icmlEqualContribution, \icmlProjectLead} %

\begin{abstract}
Graph condensation aims to reduce the size of a large-scale graph dataset by synthesizing a compact counterpart without sacrificing the performance of Graph Neural Networks (GNNs) trained on it, which has shed light on reducing the computational cost for training GNNs. 
Nevertheless, existing methods often fall short of accurately replicating the original graph for certain datasets, thereby failing to achieve the objective of lossless condensation. 
To understand this phenomenon, we investigate the potential reasons and reveal that the previous state-of-the-art trajectory matching method provides biased and restricted supervision signals from the original graph when optimizing the condensed one. This significantly limits both the scale and efficacy of the condensed graph.
In this paper, we make the first attempt toward \textit{lossless graph condensation} by bridging the previously neglected supervision signals. 
Specifically, we employ a curriculum learning strategy to train expert trajectories with more diverse supervision signals from the original graph, and then effectively transfer the information into the condensed graph with expanding window matching.
Moreover, we design a loss function to further extract knowledge from the expert trajectories. 
Theoretical analysis justifies the design of our method and extensive experiments verify its superiority across different datasets. 
Code is released at \href{https://github.com/NUS-HPC-AI-Lab/GEOM}{https://github.com/NUS-HPC-AI-Lab/GEOM}.
\end{abstract}

\section{Introduction} \label{introduction}
Graph condensation follows the success in vision dataset distillation~\cite{wang2018dataset,zhao2020dataset, nguyen2021dataset, cazenavette2022dataset,zhou2022dataset,zhou2023dataset} and aims to synthesize a smaller condensed graph dataset from the original one. 
Recently, gradient and trajectory matching methods~\cite{jin2022condensing,jin2022graph,zheng2023structure,hashemi2024comprehensive} have achieved remarkable results on some small-scale graph datasets. 
For instance, SFGC~\cite{zheng2023structure} condenses Citeseer~\cite{kipf2016semi} to 1.8\% sparsity without performance drop.
However, these methods fail to perform well on large-scale graph datasets, \textit{i.e}, there persists an unignorable performance gap between GNNs trained on the condensed and original graph datasets.
This severely limits their effectiveness in real-world scenarios.
Therefore, developing a high-performing and robust graph condensation approach has become urgent for broader graph-related applications.

In addressing the condensation challenges on large-scale graphs, a critical question arises: \textit{what causes the notable discrepancy in performance between condensing large-scale and small-scale graphs}?
To this end, we analyze the differing outcomes of previous methods applied to graphs of various sizes. 
A key observation highlights that a smaller condensation ratio is utilized for large-scale graphs compared to small-scale ones.
This suggests a greater disparity in size between large-scale graphs and their condensed counterparts. 
One intuitive solution is to enlarge the condensation ratio.
We conduct experiments with the existing methods and show results in Fig.~\ref{fig:Increased Accuracy}.
However, we find that the performance of the condensed graph saturates as the ratio increases. 
Besides, a significant gap still exists between the saturated performance and that of the original graph.

\begin{figure*}[t]
\centering
    \subfigure[]
    {\includegraphics[width=0.226\textwidth, angle=0]{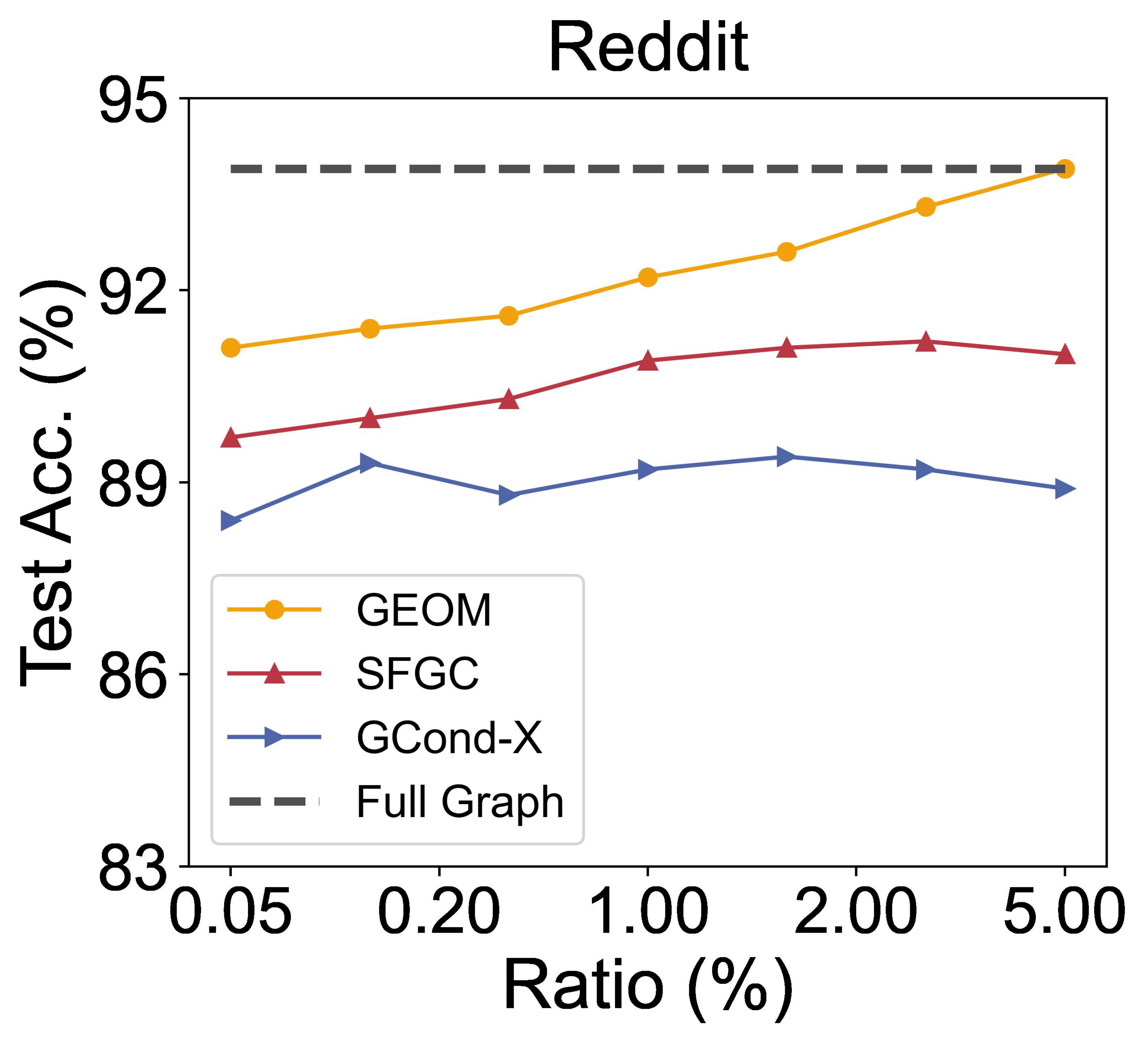}
    \label{fig:Increased Accuracy}}
    \hspace{1mm}
    \subfigure[]
    {\includegraphics[width=0.23\textwidth, angle=0]{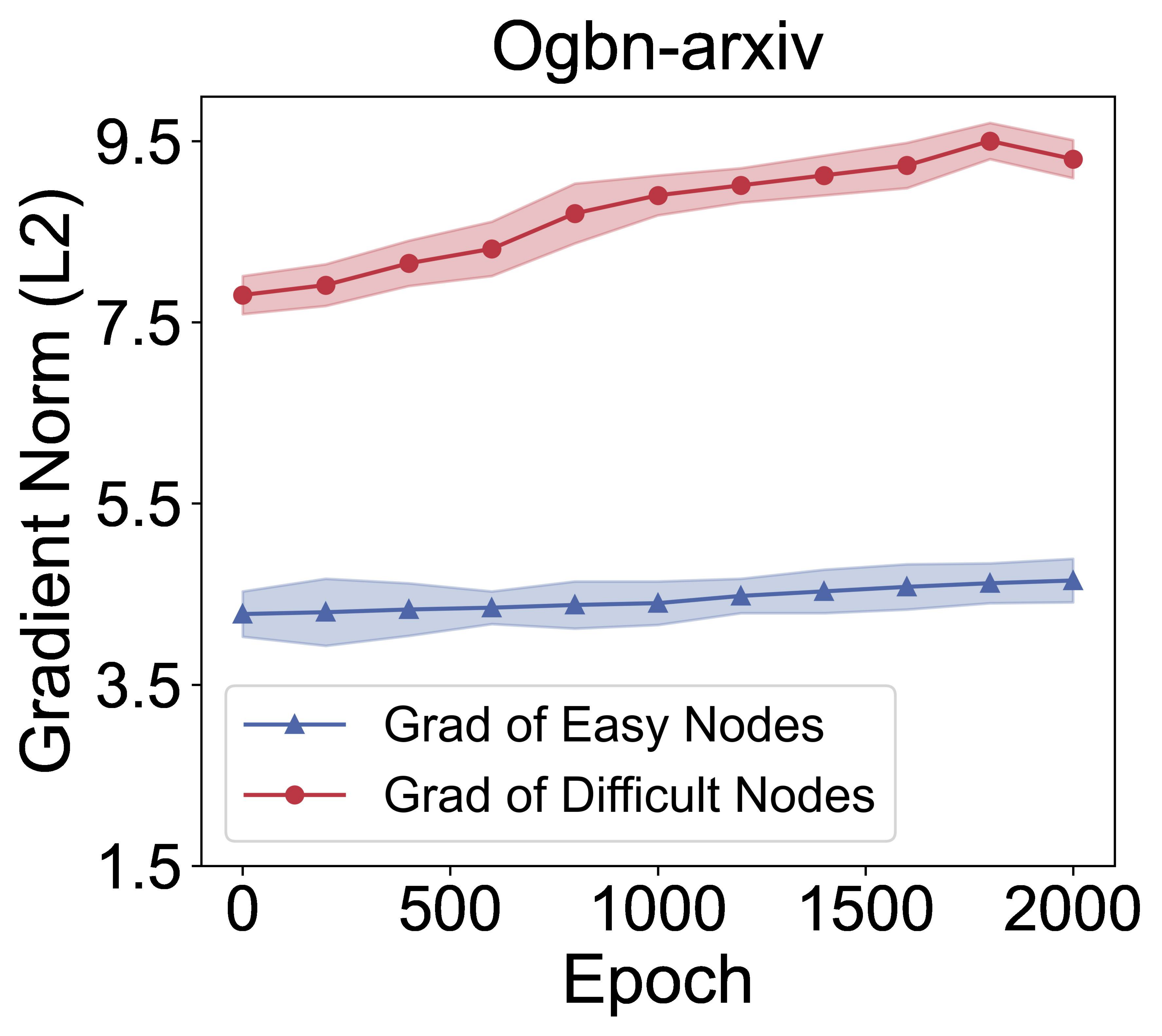}
    \label{fig:arxiv grad of SFGC}}
    \hspace{1mm}
    \subfigure[]
    {\includegraphics[width=0.226\textwidth, angle=0]{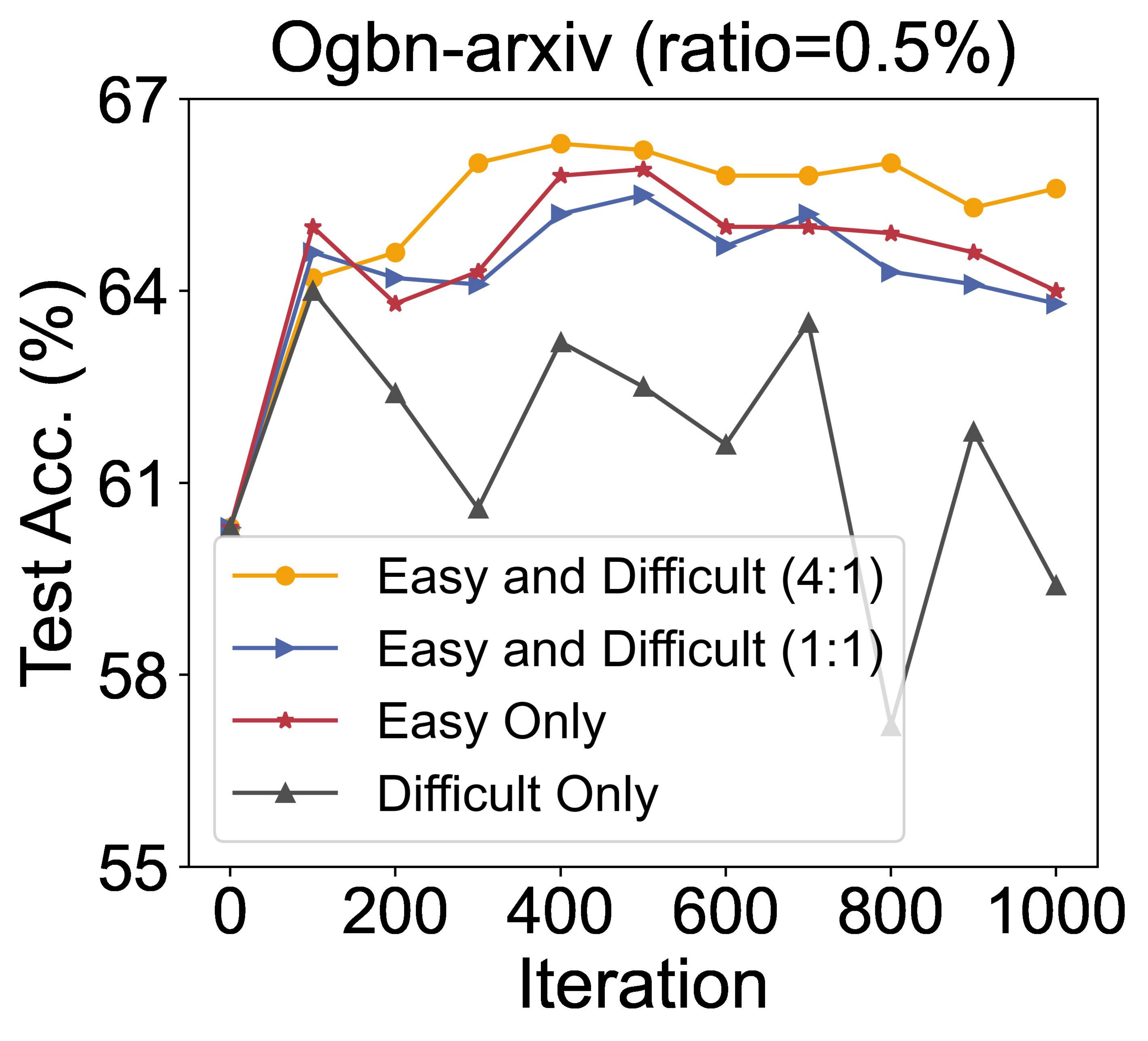}
    \label{fig:Performance of different nodes1}}
    \hspace{1mm}
    \subfigure[]
    {\includegraphics[width=0.226\textwidth, angle=0]{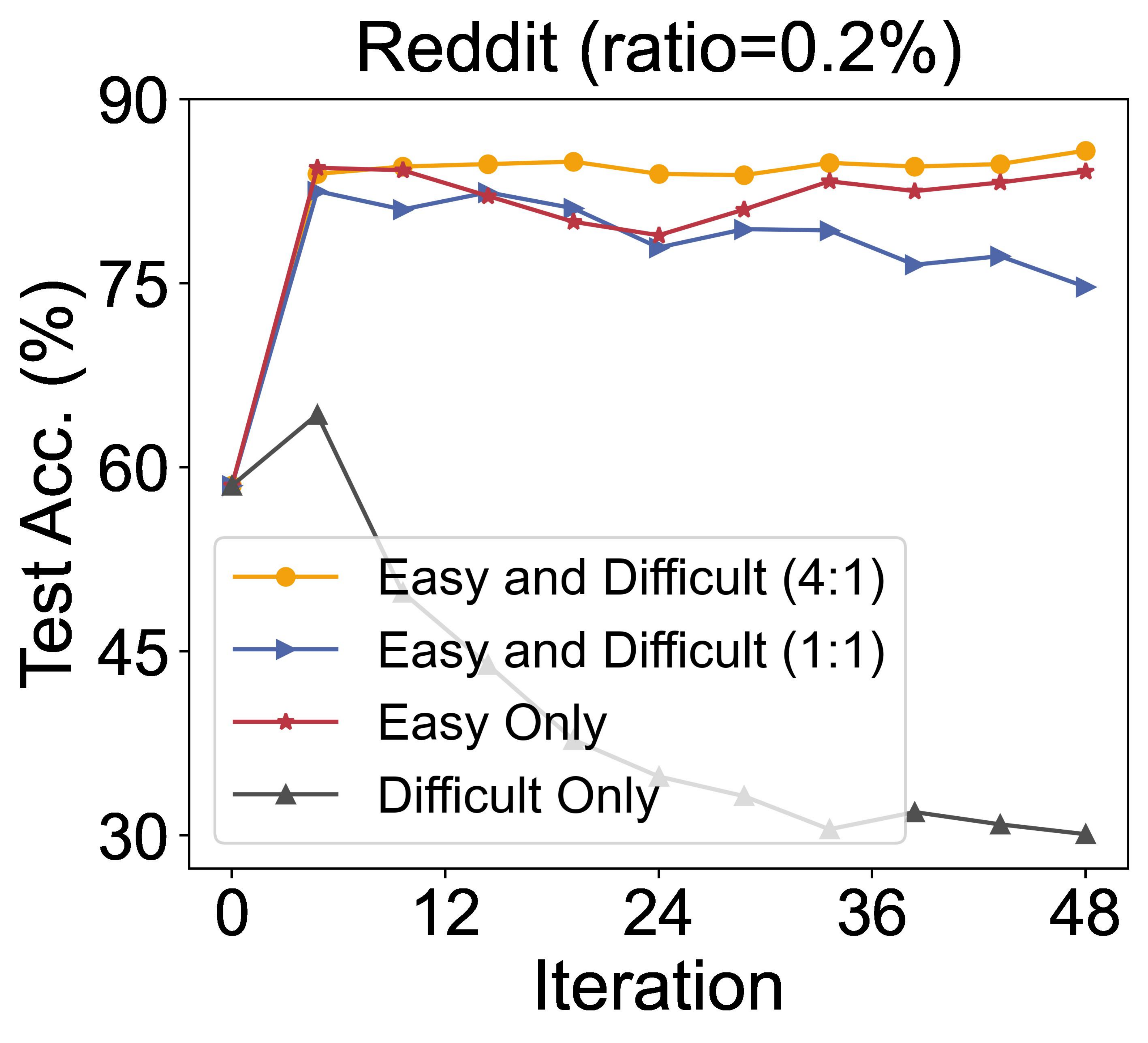}
    \label{fig:Performance of different nodes2}}
    \vspace{-10pt}
\caption{(a) shows that the performances in previous methods~\cite{jin2022graph, zheng2023structure} stop increasing after the condensation ratio in large-scale graph dataset reaches a certain threshold. (b) shows that the gradient generated by easy nodes and difficult nodes during the training process of the experts. (c) and (d) 
indicate that using supervision signals provided by solely easy nodes can yield better performance for the condensed graph than solely using difficult nodes. Nevertheless, incorporating a small portion of difficult nodes into easy ones can further enhance the performance of the condensed graph.}\label{fig:pre-exp}
    \vspace{-8.0pt}
\end{figure*}

GCond~\cite{jin2022graph} explains this phenomenon as follows: optimizing a larger condensed graph might be more complex. 
Nevertheless, GCond does not specify the exact cause of this phenomenon.
We infer that the lack of rich supervision from the original graph might be a potential reason.
To verify it, we take trajectory matching method SFGC~\cite{zheng2023structure} as an example\footnote{We provide detailed experimental settings in Appendix~\ref{section: samples}} for the following exploration.
SFGC trains a set of expert trajectories on the original graph as supervision for optimizing the condensed graph.
Thus, to analyze the principal components of the supervision, we visualize the gradient norm of easy and difficult nodes (categorized by homophily level~\cite{wei2023clnode}) from the original graph in Fig.~\ref{fig:arxiv grad of SFGC}. 
One can find that the difficult nodes dominate the principal components of supervision from the original graph. 
This may cause the condensed graph to exhibit a bias toward difficult nodes while overlooking easy nodes~\cite{wang2022cafe}.

To investigate the effectiveness of different components of the supervision signals, we select four ratios of mixed easy and difficult nodes to train the expert trajectories. 
We report the performance comparisons of the condensed graph in Fig.~\ref{fig:Performance of different nodes1} and~\ref{fig:Performance of different nodes2}. 
Several observations can be concluded as follows: 1) Only using the supervision signals from difficult nodes can not perform well; 2) Using a proper ratio of easy and difficult nodes obtains the best performance in our experiments.
These observations highlight the importance of understanding the impact of node characteristics on learning dynamics in the condensed graph.
In graph learning, easy nodes have representative features of the original graph, whereas difficult nodes contain ambiguous features~\cite{wei2023clnode}.
Combined with the observations, we conclude that condensed graph captures representative patterns
under the supervision of easy nodes.
Although proper supervision of difficult nodes further enrich the patterns, excessive supervision of them may result in chaotic features~\cite{liu2023dream,liu2023dream+} and damage the representative patterns.



Based on our findings, we propose a novel approach called \textbf{G}raph Condensation via \textbf{E}xpanding Wind\textbf{O}w \textbf{M}atching (GEOM).
Specifically, we train the expert trajectories with curriculum learning to involve more diverse supervision signals from the original graph.
Then, we utilize an expanding window to determine the matching range when matching trajectories.
In this way, we enable the rich information of the original graph can be compactly and efficiently transferred to the condensed counterpart. Theoretical analysis justifies our design from the perspective of reducing accumulated error~\cite{du2023minimizing}.
Furthermore, inspired by network distillation~\cite{hinton2015distilling, zhang2021graph}, we design a loss function to uncover information in expert trajectories from a new perspective.

In this work, we make the first attempt toward \textit{lossless graph condensation}. 
Concretely, we condense Citeseer to 0.9\%, Cora to 1.3\%, Ogbn-arxiv to 5\%, Flickr to 1\%, and Reddit to 5\% without any performance loss when training a GCN. 
Moreover, our condensed graphs can generalize well to different GNN models, and even achieve \textit{lossless} performance across 20 out of 35 cross-architecture experiments. 
We hope our work can help mitigate the heavy computation cost for training GNNs on large-scale graph datasets and broaden the real-world applications of graph condensation.

\vspace{-0.3em}
\section{Method} \label{method}
\vspace{-0.3em}
In this section, we first briefly overview the framework of trajectory matching graph condensation and curriculum learning. Then we introduce the components of our method as well as theoretical understanding.

\subsection{Preliminaries} \label{2.1}
\textbf{Trajectory matching graph condensation}~\cite{zheng2023structure}\textbf{.}
Given a large graph dataset $\mathcal{T}$, trajectory matching graph condensation synthesizes a small graph dataset $\mathcal{S}$ by minimizing the training trajectory distance on $\mathcal{T}$ and $\mathcal{S}$. 
It aims to reduce the performance gap between GNNs trained on $\mathcal{T}$ and $\mathcal{S}$.
Generally, trajectory matching graph condensation can be divided into three phases. 
\begin{compactenum}[(a)]
\item \textbf{Buffer Phase}.
Preparing the expert trajectories: training GNNs on $\mathcal{T}$ and saving the checkpoints. 
\item  \textbf{Condensation Phase}. Condensing the original graph dataset: optimizing the condensed graph by matching the training trajectories between $\mathcal{T}$ and $\mathcal{S}$.
\item \textbf{Evaluation Phase}. Evaluating the condensed graph dataset: using the condensed graph datasets to train a randomly initialized GNN.
\end{compactenum}
Formally, in the \textit{condensation} phase, we optimize the following objective to synthesize the condensed graph $\mathcal{S}$:
\begin{equation}\label{eqn-0}
\min_{S} \mathrm{E}_{\theta_{t}^{*}\sim P_{\theta_{\tau}}} \left[ \mathcal{L}_{M} \left( (\theta_{t}^{*}|^{t+p}_{t}, \widetilde{\theta}_{t}|^{t+q}_{t})\right) \right],
\end{equation}
where $\theta_{t}^{*}|^{t+p}_{t}, \widetilde{\theta}_{t}|^{t+q}_{t}$ 
denote the parameters of GNNs trained on $\mathcal{T}$ and $\mathcal{S}$ within checkpoints $\left(t,t+p\right)$, 
$P_{\theta_{\tau}} $ denotes the parameter distribution with the expert trajectories.
$\mathcal{L}_{M}$ is the distance between trajectories trained on $\mathcal{T}$ and $\mathcal{S}$, which can be written as:
\begin{equation}\label{eqn-1}
\mathcal{L}_{M} = \frac { \left \| \widetilde{\theta}_{t+q}-\theta_{t+p}^{*}\right\|_2^2}{\left\|\widetilde{\theta}_{t}-\theta_{t+p}^{*}\right\|_2^2},
\end{equation}
where $\widetilde{\theta}_{t} = \theta_{t}^{*}$, $\theta_{t+p}^{*}$ denotes the model parameters $p$ checkpoints after $\theta_{t}^{*}$. Meanwhile, $\widetilde{\theta}_{t+q}$ results from $q$ inner-loops using the classification loss $\ell$, applied to dataset $\mathcal{S}$, and a learnable learning rate $\eta$:
\begin{equation}\label{eqn-3}
 \widetilde{\theta}_{t+i+1}= \widetilde{\theta}_{t+i}-\eta\nabla\ell\left(f\left(\widetilde{\theta}_{t+i};\mathcal{S}\right),\mathcal{Y}\right),
\end{equation}
where $f\left( ; \right)$ is the GNN trained on $\mathcal{S}$, $\mathcal{Y}$ is the label set of the condensed graph dataset.
Note that in the \textit{buffer} phase, GNN is trained on the whole graph dataset $\mathcal{T}$ by default.

\textbf{Curriculum learning.} 
Different from the normal training scheme~\cite{zheng2023structure, cazenavette2022dataset}, the most distinctive characteristic of curriculum learning (CL) lies in differentiating the training samples~\cite{bengio2009curriculum, krueger2009flexible}.
Specifically, CL imitates how humans learn by organizing data samples in a logical sequence, primarily from easy to difficult, as the curriculum for model training~\cite{wei2023clnode, wang2021curgraph}.
Prior works demonstrate that CL steers models to a more optimal parameter space~\cite{li2023curriculum,bengio2009curriculum} than normal training.
CL can enhance model performance, generalization, robustness, and even convergence in diverse scenarios~\cite{sitawarin2021sat,weinshall2020theory, krishnapriyan2021characterizing}.


\subsection{Preparing Curriculum-based Expert Trajectories} \label{2.2}
The superiority of CL has been demonstrated across various tasks, prompting us to integrate CL into graph condensation.
Taking a closer look at CL, it allows the model to initially focus on easy samples and then gradually shift attention to more difficult ones, thereby forming expert trajectories with more diverse supervision signals.
To implement the CL approach, we design a difficulty measurer based on homophily to differentiate between easy and difficult samples.
Moreover, we utilize a continuous training scheduler to sequence the samples into an easy-to-difficult curriculum.

\textbf{Homophily-based difficulty measurer.} 
On node classification tasks, GNNs learn node representation through an iterative process of aggregating neighborhood information~\cite{ma2021homophily, halcrow2020grale}.
Owing to this mechanism, the nodes tend to aggregate features from neighbors sharing the same class will receive additional information about their class features. 
Thus, GNNs are more adept at learning these nodes as they will have more representative features~\cite{chien2020adaptive,zhu2020beyond}.
Conversely, for nodes aggregate features from neighbors in many different classes, their representations become chaotic, making them hard to learn~\cite{maurya2021improving,mao2023demystifying}.


Thus, inspired by CLNode~\cite{wei2023clnode}, we calculate the difficulty score for each node through the label distribution of its neighborhood to distinguish between easy and difficult. 
Specifically, for each training node $x$, the difficulty score can be calculated as follows:
\begin{equation}
\mathcal{P}_{c}(x)=\frac{|\{y_n=c|n\in\mathcal{N}(x)\cup\{x\}\wedge y_n \in \mathcal{Y}\}|}{|{\mathcal N}(x)\cup\{x\}|},
\end{equation}
\begin{equation}\label{eqn-4}
\mathcal{D}(x)=-\sum_{c\in C}P_c(x)log(P_c(x)),
\end{equation}
Where $y_n$ denotes the label of node $n$, $\mathcal{P}_{c}(x)$ represents the proportion of neighborhood nodes $\mathcal{N}(x)\cup x$ in class $c$. The difficulty score $\mathcal{D}(x)$ is higher as the neighbor nodes of node $x$ become more diverse (as illustrated in Fig~\ref{fig:diff}). 

\begin{figure}[htbp]
 \centering
 \includegraphics[width=0.42\textwidth]{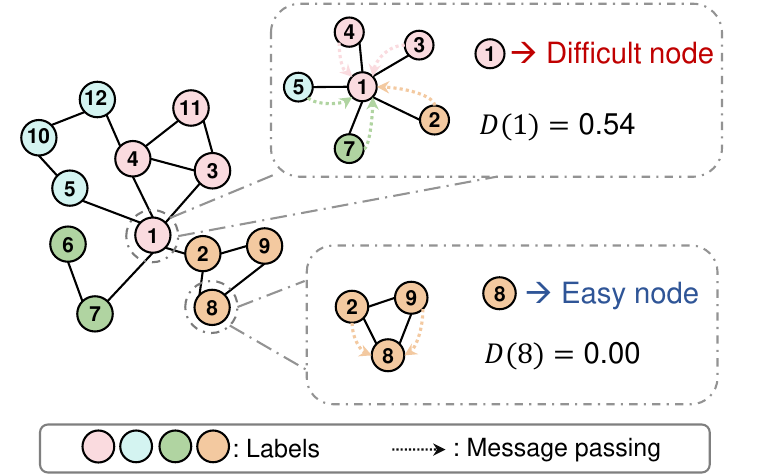}
\caption{An example of homophily-based difficulty measurer.}
\label{fig:diff}
\end{figure}

\textbf{Curriculum training scheduler.} 
After getting the difficulty score, we utilize a continuous training scheduler to generate an easy-to-difficult curriculum for training expert trajectories. 
Specifically, we use a pacing function to map each epoch $t$ to a scalar $\lambda_t$ in (0, 1], and then select a proportion $\lambda_t$ of the easiest nodes for training at epoch $t$.
More details of the pacing function is detailed in Appendix~\ref{training sch}. 
Furthermore, we do not stop training when the whole graph training set is involved, as the recently added nodes may not have been sufficiently learned at this time. 
Specifically, we persist in training with the whole graph training set until the validation set accuracy converges.

\begin{figure*}[htbp]
 \centering
 \includegraphics[width=0.9\textwidth]{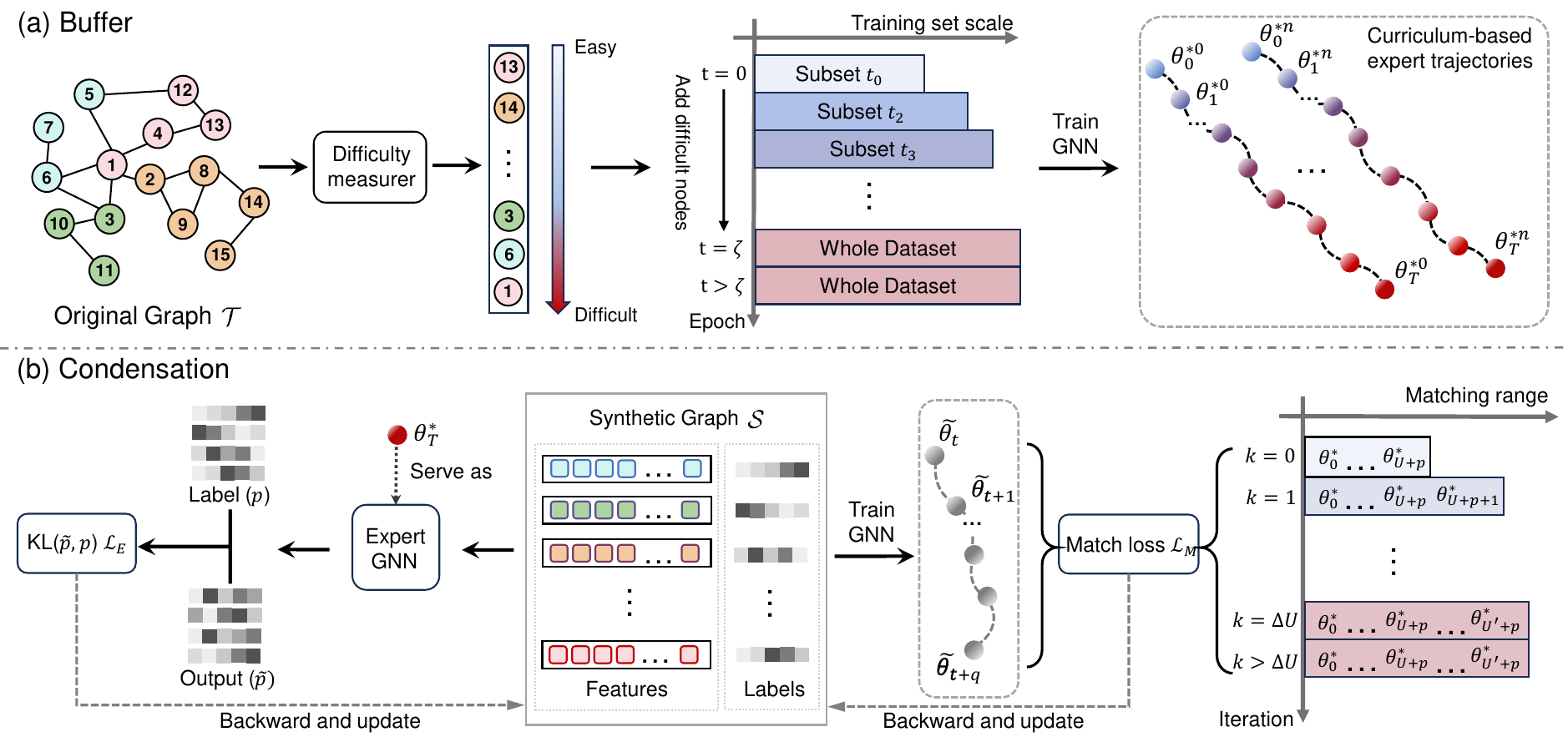}
   \vspace{-9.0pt}
\caption{Overall framework of GEOM. In the \textit{buffer} phase, we train the expert trajectories with curriculum learning to involve more informative supervision signals from the original graph. In the \textit{condensation} phase, we utilize expanding window matching to capture the rich information. Moreover, a knowledge embedding extractor is used to further extract knowledge from the expert trajectories.}
\vspace{-8.0pt}
 \label{fig:framework}
\end{figure*}

\subsection{Expanding Window Matching} 
Since we obtain expert trajectories with more diverse supervision signals through CL in the \textit{buffer} phase, 
we aim to fully utilize the rich information embedded in them to optimize the condensed graph. 
One straightforward way is to gradually move the matching range (we can sample the expert trajectory segments that need to be matched from this range) later to shift the focus of the condensed graph from primarily learning from easy nodes to difficult nodes. 

However, such a matching strategy significantly degrades the performance of the condensed graph (as shown in Table~\ref{tab:abl}).
One potential reason is: once the whole matching range is shifted later, the condensed graph falls into the trap of learning patterns from the difficult nodes continually, thereby collapsing the representative patterns.

To address the challenge of effectively learning patterns from easy and difficult nodes, we propose to use an adaptive window that gradually expands the matching range instead of a fixed sliding window, termed expanding window matching.
Formally, we determine the matching range $\mathcal{R}$ as:
\begin{equation}\label{eqn-5}
\mathcal{R} =\begin{cases}
\{\theta_0^{*}\ldots\theta_{U+p}^{*}\} ,&\quad {I} <{U}\\
\{\theta_0^{*}\ldots\theta_{U+p}^{*}\; \theta_{U+p+I}^{*} \},&\quad {I}\geq{U}\wedge {I}<{U^{'}},\\
\{\theta_0^{*}\ldots\theta_{U+p}^{*} \ldots \theta_{U^{\prime}+p}^{*}\},&\quad {I}\geq{U^{'}}
\end{cases}
\end{equation}
where $I$ denotes the number of the iteration in the \textit{condensation} phase, ${U}$ and ${U^{'}}$ are two different upperbounds to control the size of the matching range. 

Expanding window matching ensures that in the early stages of \textit{condensation}, the main component of supervision signals is from easy nodes, allowing the condensed graph to initially learn representative patterns. 
In the later stages of \textit{condensation}, the condensed graph can maintain these representative patterns while enriching them. 
This is because the matching strategy efficiently controls the weight of easy and difficult nodes in supervision signals, the condensed graph has the opportunity to learn from both of them.

From another perspective, in the previous method, the matching range is confined to a very narrow scope, causing only a few checkpoints can be utilized. 
In contrast, the proposed expanding window matching brings 10 times more available checkpoints than before, thereby more effectively utilizing the information provided by expert trajectories. 

Next, we provide the theoretical analysis to demonstrate the advantages of employing CL in the \textit{buffer} phase and the expanding window matching in the \textit{condensation} phase.

\textbf{Theoretical understanding.}
In the \textit{condensation} phase, the trajectory on $\mathcal{S}$ is optimized to reproduce the trajectory on $\mathcal{T}$ with $\widetilde{\theta}_{t} = \theta_{t}^{*}$. However, in the \textit{evaluation} phase, the starting points are no longer initialized by the parameters on $\mathcal{T}$ and the parameters are continually updated by $\mathcal{S}$. 
The error accumulates progressively in the \textit{evaluation} phase,
which leads to greater divergence between the final parameters of GNNs trained on $\mathcal{S}$ and $\mathcal{T}$.
Thus, reducing this error helps improve the final performance of $\mathcal{S}$.

Following~\citet{du2023minimizing}, we first divide the training trajectories into $N$ stages to be matched, denoted as $\{\theta^{*}_{0,0},...,\theta^{*}_{0,p},\theta^{*}_{1,0},...,\theta^{*}_{N-1,p}\}$,  
where the last parameter of a previous stage is the starting parameter of the next stage, \textit{i.e.} \(\theta^*_{n,0}=\theta^*_{n-1,p}\).
The training trajectory of GNNs trained on $\mathcal{S}$ can also be divided into corresponding $N$ stages similarly. 
For any given stage \(n\), the following definition apply:
\begin{definition}
\textit{Accumulated error $\epsilon_n$} refers to the difference in model parameters trained on condensed and original graphs at stage $n$ 
during the \textit{evaluation} phase:
\begin{equation}\label{eqn-6}
\epsilon_n = \widetilde{\theta}_{n,q} - \theta^{*}_{n,p} = \widetilde{\theta}_{n+1,0} - \theta^{*}_{n+1,0},
\end{equation}
\end{definition}
To specifically analyze the accumulated error, we introduce two additional error terms as follows:

\begin{definition}
\textit{Initialization error $I$} refers to the discrepancies caused by varying initial parameters during the training process. 
Specifically, even if the condensed graph can generate identical trajectories to the original, variations at the trajectories' endpoints are unavoidable due to the differing starting points in the \textit{condensation} phase compared to the \textit{evaluation} phase, \textit{i.e.}
$\widetilde\theta_{n,0}=\theta^{*}_{n,0}+\epsilon_{n-1}$.
To simplify the notation, we denote the parameter changes of the GNN trained for $p$ rounds on $T$ and $q$ rounds on $S$ as $\Theta_S(\theta_{0},q)=\sum^q_{i=0}\bigtriangledown_{\theta}L_{S}(f_{\theta_{0+i}})$ and $\Theta_T(\theta_{0},p)=\sum^p_{i=0}\bigtriangledown_{\theta}L_{T}(f_{\theta_{0+i}})$, respectively. 
Then, the initialization error at stage $n$ is:
\begin{equation}\label{eqn-7}
I_n = I(\theta^*_{n,0},\epsilon_{n-1})  = \Theta_S(\theta^{*}_{n,0}+\epsilon_{n-1},q)-\Theta_S(\theta^{*}_{n,0},q),
\end{equation}
\end{definition}
\begin{definition}
\textit{Matching error $\delta$} refers to differences at the endpoints of the same stage in the training trajectories of GNNs trained on $\mathcal{T}$ and $\mathcal{S}$ during the \textit{condensation} phase:
\begin{equation}\label{eqn-8}
\delta_{n+1}=\Theta_S(\theta^*_{n,0},q)-\Theta_T(\theta^*_{n,0},p).
\end{equation}
\end{definition}
The following theorem elucidates the relation among errors:
\begin{theorem}
\label{thm:bigtheorem}
During the \textit{evaluation} phase, the accumulated error at any stage is determined by its initial value and the sum of matching error and initialization error starting from the second stage.
\begin{equation}\label{eqn-9}
\epsilon_{n+1}  =\sum^{n}_{i=1}I({\theta^*_{i,0},\epsilon_{i-1}})+ \sum^{n}_{i=0}\delta_{i+1} +\epsilon_{0}.
\end{equation}
\end{theorem}
The proof for the above theorem can be found in Appendix~\ref{section: details_th}.
In the previous condensation method, only 
a few stages of the expert trajectory 
are selected to optimize the condensed graph. Assuming the sum of matching errors is optimized to \(\mu\) in the \textit{condensation} phase, the optimized accumulated error can be formulated as:
\begin{equation}\label{eqn-10}
\epsilon^{*}_{n+1} = \sum^{n}_{i=1}I({\theta^*_{i,0},\epsilon_{i-1}}) + \mu +\epsilon_{0}.
\end{equation}
\begin{corollary}
The proposed strategy can optimize the accumulated error in both the \textit{buffer} and \textit{condensation} phases.
\end{corollary} 
\vspace{-5.0pt}

\begin{proof}
According to~\cite{du2023minimizing}, flatter training trajectories reduce initialization error and can be derived from the following equation: 
\begin{equation}
\begin{split}
\theta^{*}_{n, i} &= \arg\min_{\theta^{*}_{n,i}} ||I(\theta^{*}_{n,i}, \epsilon_{n-1})||^2 \\
&\approx \arg\min_{\theta^{*}_{n,i}} \{ \mathcal{L}_{M}(f\left(\theta^{*}_{n,i}\right) + \alpha S(\theta^{*}_{n,i})\}
\end{split}
\end{equation}
where $\alpha$ as the coefficient that balances the robustness of $\theta^*$ to the perturbation, and $S(\theta)$ as the sharpness of the loss landscape. CL has been demonstrated in smoothing the loss landscape~\cite{sinha2020curriculum,  zhang2021curriculum}. 
Since we employ CL in the \textit{buffer} phase, we reduce $S(\theta)$ efficiently, thereby reducing accumulated error.

Moreover, employing expanding window matching to determine the matching range can involve more stages in the expert trajectories as matching targets.
This enables the direct optimization of  \(\delta_n\), thereby reducing the sum of matching errors \(\mu^{'}\). 
When conducting expanding window matching, multiple simulations of the \textit{evaluation} phase can be involved in the \textit{condensation} phase, \textit{i.e.}, training GNNs on $\mathcal{S}$ and $\mathcal{T}$ starting from \(\theta^{*}_{0,0}\), then minimize the matching error of this stage. 
This allows for the effective optimization of \(\epsilon_{0}\) and \(I({\theta^*_{1,0},\epsilon_{0}})\) in the \textit{condensation} stage as well.
\begin{equation}\label{eqn-11}
\epsilon^{*'}_{n+1}=\sum^{n}_{i=1}I^{'}({\theta^*_{i,0},\epsilon_{i-1}})+ \mu' +\epsilon^{'}_{0}<\epsilon^{*}_{n+1}.
\end{equation} 
Where $\epsilon^{'}$, \(I'\), $\mu^{'}$ are the reduced $\epsilon$, \(I\), $\mu$ respectively.
The above corollary suggests that using CL in the \textit{buffer} phase and expanding window matching in the \textit{condensation} phase can effectively reduce the accumulated error during the \textit{evaluation} phase.
\end{proof}
\vspace{-5.0pt}

\begin{algorithm}[!ht]
\caption{GEOM for condensing graph.}
\label{alg:GEOM1}
\begin{algorithmic}[1]
\State \textbf{Input:} Original graph dataset $\mathcal{T}$.

\State {\algorithmicrequire} 
$\{\tau_{p}\}$: A set of expert trajectories obtained by training ${\mathrm{GNN}}_{\mathcal{T}}$ on $\mathcal{T}$ with a curriculum learning schema.
$p$: numbers of the training steps of ${\mathrm{GNN}}_{\mathcal{S}}$
$q$: numbers of checkpoints between the start and target parameters.
${U}, {U^{'}}$: two upper bounds to determine the matching range. Initialized condensed graph $\mathcal{S}$.

\For{$k = 0, \ldots, K - 1$}
  \State Randomly sample a expert trajectory $\tau_p \sim \{\tau_{p}\}$ 
  \State Randomly sample $\theta_{t}^{*}$ and $\theta_{t+p}^{*}$, 
 where $0<t\leq{U}$
  \State Initialize $\widetilde{\theta}_{t},  
   \widetilde{\theta}_{t} = \theta_{t}^{*}$
  \For{$i = 0, \ldots, q - 1$}
    \State training GNNs on $\mathcal{S}$ and update $\widetilde {\theta}_{t+i}$ 
    \Statex\hspace{1.05cm}through Eq.~\ref{eqn-3}
  \EndFor
  \State Update condensed graph $\mathcal{S}$ through Eq.~\ref{eqn-13}
  \State {\algorithmicif} \hspace{0.1cm} ${U}< {U^{'}}$ {\algorithmicthen} 
  \State \hspace{0.4cm} ${U} = {U}+1$
\EndFor
\State \textbf{Output:} Condensed graph dataset $\mathcal{S}$.
\end{algorithmic}
\end{algorithm}

\definecolor{highlightcolor}{RGB}{198, 239, 252}  
\sethlcolor{highlightcolor}

\begin{table*}[ht]
  \centering
  \caption{Performance comparison to baselines in the node classification tasks. We achieve\textit{the highest results in most cases
  on node classification and lossless results on all datasets.} We report test accuracy (\%) on Citeseer, Cora, Ogbn-arxiv, Flickr, and Reddit. \textbf{Bold entries} are best results, \hl{highlight} marks the lossless results. Some experiments appear out of memory (oom).}
  \label{tab:nc}
  \scriptsize
   \resizebox{1.0\textwidth}{!}{\begin{tabular}{cc|ccccccc|c|c}
    \toprule
    Dataset & Ratio ($r$) & Random & Herding & K-Center & DC-Graph & GCond & GCond-X & SFGC & GEOM & Whole Dataset\\
    \midrule
    \multirow{3}{*}{Citeseer} & 0.90\% & 54.4$_{\pm 4.4}$ & 57.1$_{\pm 1.5}$ & 52.4$_{\pm 2.8}$ & 66.8$_{\pm 1.5}$ & 70.5$_{\pm 1.2}$ & 71.4$_{\pm 0.8}$ & 71.4$_{\pm 0.5}$ & 
    
    \textbf{\hl{73.0$_{\pm 0.5}$}} & \multirow{3}{*}{71.7$_{\pm 0.1}$} \\
    & 1.80\% & 64.2$_{\pm 1.7}$ & 66.7$_{\pm 1.0}$ & 64.3$_{\pm 1.0}$ &  59.0$_{\pm 0.5}$ & 70.6$_{\pm 0.9}$ & 69.8$_{\pm 1.1}$ & \hl{72.4$_{\pm 0.4}$} & 
    
    \textbf{\hl{74.3$_{\pm 0.1}$}} \\
     & 3.60\% & 69.1$_{\pm 0.1}$ & 69.0$_{\pm 0.1}$ & 69.1$_{\pm 0.1}$ & 66.3$_{\pm 1.5}$  & 69.8$_{\pm 1.4}$ & 69.4$_{\pm 1.4}$ & 70.6$_{\pm 0.7}$ & 
     
     \textbf{\hl{73.3$_{\pm 0.4}$}} \\
    \midrule
    \multirow{3}{*}{Cora} & 1.30\%  & 63.6$_{\pm 3.7}$ & 67.0$_{\pm 1.3}$ & 64.0$_{\pm 2.3}$ & 67.3$_{\pm 1.9}$ &  79.8$_{\pm 1.3}$ & 75.9$_{\pm 1.2}$ & 80.1$_{\pm 0.4}$ & 
    
    \textbf{\hl{82.5$_{\pm 0.4}$}} & \multirow{3}{*}{81.2$_{\pm 0.2}$} \\
    & 2.60\% &72.8$_{\pm 1.1}$ & 73.4$_{\pm 1.0}$ & 73.2$_{\pm 1.2}$  & 67.6$_{\pm 3.5}$ &  80.1$_{\pm 0.6}$ & 75.7$_{\pm 0.9}$ & \hl{81.7$_{\pm 0.5}$} & 
    
    \textbf{\hl{83.6$_{\pm 0.3}$}} \\
     & 5.20\%  & 76.8$_{\pm 0.1}$ & 76.8$_{\pm 0.1}$ & 76.7$_{\pm 0.1}$ & 67.7$_{\pm 2.2}$ & 79.3$_{\pm 0.3}$ & 76.0$_{\pm 0.3}$ & \hl{81.6$_{\pm 0.8}$} & 
     
     \textbf{\hl{82.8$_{\pm 0.7}$}} \\
    \midrule
    \multirow{5}{*}{Ogbn-arxiv} & 0.05\% & 47.1$_{\pm 3.9}$ & 52.4$_{\pm 1.8}$ & 47.2$_{\pm 3.0}$ & 58.6$_{\pm 0.4}$ & 59.2$_{\pm 1.1}$ & 61.3$_{\pm 0.5}$ & 65.5$_{\pm 0.7}$ & 
    \textbf{65.5$_{\pm 0.6}$} & \multirow{5}{*}{71.4$_{\pm 0.1}$} \\
    
    & 0.25\%  &57.3$_{\pm 1.1}$ & 58.6$_{\pm 1.2}$ & 56.8$_{\pm 0.8}$ & 59.9$_{\pm 0.3}$ &63.2$_{\pm 0.3}$ & 64.2$_{\pm 0.4}$ & 66.1$_{\pm 0.4}$ & 
    \textbf{68.8$_{\pm 0.2}$} \\
    
     & 0.50\%  &60.0$_{\pm 0.9}$ & 60.4$_{\pm 0.8}$ & 60.3$_{\pm 0.4}$ & 59.5$_{\pm 0.3}$ &  64.0$_{\pm 1.4}$ & 63.1$_{\pm 0.5}$ & 66.8$_{\pm 0.4}$ & 
     \textbf{69.6$_{\pm 0.2}$} \\

    & 2.50\%  &64.1$_{\pm 0.7}$ & 64.3$_{\pm 0.8}$ & 64.1$_{\pm 0.5}$ & 61.3$_{\pm 0.3}$ &  66.3$_{\pm 1.1}$ & 66.1$_{\pm 0.3}$ & 68.3$_{\pm 0.3}$ & 
     \textbf{71.0$_{\pm 0.1}$} \\

    & 5.00\%  &66.0$_{\pm 0.6}$ & 66.1$_{\pm 0.4}$ & 66.2$_{\pm 0.3}$ & 66.7$_{\pm 0.3}$ &  oom & 66.9$_{\pm 0.4}$ & 69.4$_{\pm 0.3}$ & 
     \textbf{\hl{71.4$_{\pm 0.1}$}} \\
     
      \midrule
    \multirow{3}{*}{Flickr} & 0.10\% &41.8$_{\pm 2.0}$ & 42.5$_{\pm 1.8}$ & 42.0$_{\pm 0.7}$  & 46.3$_{\pm 0.2}$ & 46.5$_{\pm 0.4}$ & 45.9$_{\pm 0.1}$ & 46.6$_{\pm 0.2}$ & 
    
    \textbf{47.1$_{\pm 0.1}$} & \multirow{3}{*}{47.2$_{\pm 0.1}$} \\
    & 0.50\%  & 44.0$_{\pm 0.4}$ & 43.9$_{\pm 0.9}$ & 43.2$_{\pm 0.1}$ & 45.9$_{\pm 0.1}$ &  \textbf{47.1$_{\pm 0.1}$} & 45.0$_{\pm 0.2}$ & 47.0$_{\pm 0.1}$ & 
    
    47.0$_{\pm 0.2}$ \\
     & 1.00\% & 44.6$_{\pm 0.2}$ & 44.4$_{\pm 0.6}$ & 44.1$_{\pm 0.4}$ & 44.6$_{\pm 0.1}$ &  47.1$_{\pm 0.1}$ & 45.0$_{\pm 0.2}$ & 47.1$_{\pm 0.1}$ & 
     
     \textbf{\hl{47.3$_{\pm 0.3}$}} \\
      \midrule

    \multirow{5}{*}{Reddit} & 0.01\%  &46.1$_{\pm 4.4}$ & 53.1$_{\pm 2.5}$ & 46.6$_{\pm 2.3}$ & 88.2$_{\pm 0.2}$ & 88.0$_{\pm 1.8}$ & 88.4$_{\pm 0.4}$ & 
    
    89.7$_{\pm 0.2}$ & \textbf{91.1$_{\pm 0.4}$} \\
    & 0.10\% &58.0$_{\pm 2.2}$ & 62.7$_{\pm 1.0}$ & 53.0$_{\pm 3.3}$  & 89.5$_{\pm 0.1}$ &  89.6$_{\pm 0.7}$ & 89.3$_{\pm 0.1}$ & 90.0$_{\pm 0.3}$ & 
    
    \textbf{91.4$_{\pm 0.2}$} &\multirow{3}{*}{93.9$_{\pm 0.0}$}\\
     & 0.20\%  &66.3$_{\pm 1.9}$ & 71.0$_{\pm 1.6}$ & 58.5$_{\pm 2.1}$ & 90.5$_{\pm 1.2}$ &  90.1$_{\pm 0.5}$ & 88.8$_{\pm 0.4}$  & 90.3$_{\pm 0.3}$ & 
     
     \textbf{91.5$_{\pm 0.4}$} \\

     & 3.00\%  &78.4$_{\pm 1.3}$ & 81.3$_{\pm 1.1}$ & 82.2$_{\pm 1.4}$ & 90.8$_{\pm 0.9}$ &  oom & 89.2$_{\pm 0.2}$  & 91.0$_{\pm 0.3}$ & 
     
     \textbf{93.7$_{\pm 0.1}$} \\

          & 5.00\%  &83.6$_{\pm 1.1}$ & 88.1$_{\pm 0.8}$ & 88.3$_{\pm 1.2}$ & 91.5$_{\pm 0.7}$ &  oom & 88.9$_{\pm 0.3}$  & 91.9$_{\pm 0.2}$ & 
     
     \textbf{\hl{93.9$_{\pm 0.1}$}} \\

    \bottomrule
    \end{tabular}}%
\end{table*}

\subsection{Knowledge Embedding Extractor}
We acquire expert trajectories in \textit{buffer} phase, with these checkpoints solely employed for trajectory matching in \textit{condensation} phase.
However, utilizing checkpoints from another perspective has not yet been explored.
Given that these checkpoints contain well-trained model parameters, which retain extensive information from the original dataset~\cite{lu2023can}.
Therefore, we try to transfer such knowledge to the condensed graph to make it more informative.

Inspired by network distillation~\cite{hinton2015distilling, zhang2021graph}, which transfers knowledge from a large model to a smaller one, we propose Knowledge Embedding Extraction (KEE), 
aiming to transfer the knowledge about the original dataset from well-trained GNNs into the condensed graph.
Specifically, we first assign soft labels to the condensed graph, which are generated by well-trained GNNs with parameters chosen from the tails of expert trajectories. 
During the optimization of $\mathcal{S}$, we feed the condensed graph with soft labels into well-trained GNNs and calculate the following loss term:
\begin{equation}\label{eqn-12}
\mathcal{L}_{E}=\mathcal{D}_{KL}\left(f\left(\theta_{T}^{*};\mathcal{S}\right)\parallel\widetilde{\mathcal{Y}}\right),
\end{equation}
Where ${\mathcal D}_{KL}(\cdot||\cdot)$ represents the Kullback-Leibler (KL) divergence, $\widetilde{\mathcal{Y}}$ denotes the soft labels. By incorporating such a matching loss, we further uncovered information embedded in the expert trajectories from a unique perspective. 


\subsection{Final Objective and Algorithm}
To sum up, the total optimization objective of GEOM is:
\begin{equation}\label{eqn-13}
\min_{S} \mathrm{E}_{\theta_{t}^{*}\sim P_{\theta_{\tau}}} \left[ \mathcal{L} \left( (\theta_{t}^{*}|^{t+p}_{t}, \widetilde{\theta}_{t}|^{t+q}_{t}),(\mathcal{S},\widetilde{\mathcal{Y}})\right) \right], \mathrm{where}
\end{equation}
\begin{equation}\label{eqn-14}
\begin{split}
\mathcal{L} &= \mathcal{L}_{M} + \alpha\mathcal{L}_{E} \\
&=\frac { \left \| \widetilde{\theta}_{t+q}-\theta_{t+p}^{*}\right\|_2^2}{\left\|\widetilde{\theta}_{t}-\theta_{t+p}^{*}\right\|_2^2}, + \alpha\mathcal{D}_{KL}\left(f\left(\theta_{T}^{*};\mathcal{S}\right)\parallel\widetilde{\mathcal{Y}}\right).
\end{split}
\vspace{-10.0pt}
\end{equation}
The pipeline of the proposed GEOM is detailed in Alg.~\ref{alg:GEOM1}.


\section{Experiments} \label{experiments}

\definecolor{highlightcolor}{RGB}{198, 239, 252}  
\sethlcolor{highlightcolor}

\begin{table*}[ht]
\centering
\tiny
\renewcommand\arraystretch{0.8}
\setlength{\extrarowheight}{1.5pt} 
\caption{Performance across different GNN architectures. \colorbox{orange!30}{\makebox(14,6){Avg.}} and \colorbox{red!30}{\makebox(15,6){Std.}}: the average performance and the standard deviation of the results of all architectures, \colorbox{cyan!20}{\makebox(20,6){$\Delta$(\%)}} denotes the improvements upon the DC-Graph. \colorbox{gray!20}{GCN} indicates that the synthetic graph is condensed with GCN. \textbf{Bold entries} are best results.}
\hspace{3pt}
\label{tab:cross}
\resizebox{0.8\textwidth}{!}{%
\begin{tabular}{cccccccccccccc}
\cline{1-12}
                          &         & \multicolumn{7}{c}{Architectures}                & \multicolumn{3}{c}{Statistics}                                                                                          \\ \cline{3-12}
\multirow{-2}{*}{Datasets} & \multirow{-2}{*}{Methods} & MLP  & GAT  & APPNP & Cheby & \cellcolor{gray!40}GCN & SAGE & SGC & \cellcolor[HTML]{FFE6CC}Avg.          & \cellcolor[HTML]{FFB2B2}Std.          & \cellcolor[HTML]{C6EFFC}$\Delta$(\%)  \\ \cline{1-9}

                                                      & DC-Graph & 66.2 & - &66.4 &64.9 &\cellcolor{gray!40}66.2 &65.9 &69.6 &  \cellcolor[HTML]{FFE6CC}66.5          & \cellcolor[HTML]{FFB2B2}1.5         & \cellcolor[HTML]{C6EFFC}-               \\

                          & GCond & 63.9 &55.4 &69.6 &68.3 &\cellcolor{gray!40}70.5 &66.2 &70.3 & \cellcolor[HTML]{FFE6CC}66.3         & \cellcolor[HTML]{FFB2B2}5.0         & $\downarrow$ \cellcolor[HTML]{C6EFFC}0.2       \\
                           & SFGC & {71.3} &  {72.1} &70.5 & {71.8}& \cellcolor{gray!40}71.6 & {71.7}&  {71.8}&  \cellcolor[HTML]{FFE6CC}71.5          & \cellcolor[HTML]{FFB2B2}0.5          & $\uparrow$ \cellcolor[HTML]{C6EFFC}5.0         \\

\multirow{-5}{*}{\centering Citeseer}
& GEOM &  {74.2} &  {74.2} &  {74.0} &  {74.1}  &\cellcolor{gray!40} {74.3} &  {74.1} &  {74.3} & \cellcolor[HTML]{FFE6CC}\textbf{74.2} & \cellcolor[HTML]{FFB2B2}0.1 & $\uparrow$ \cellcolor[HTML]{C6EFFC}\textbf{7.7}   \\ \cline{1-9}
\multirow{-5}{*}{\centering ($r$ = 1.80\%)}

                                                          & DC-Graph & 67.2 & - &67.1 &67.7 &\cellcolor{gray!40}67.9 &66.2 &72.8 &  \cellcolor[HTML]{FFE6CC}68.1          & \cellcolor[HTML]{FFB2B2}2.1          & \cellcolor[HTML]{C6EFFC}-               \\

                          & GCond & 73.1 &66.2 &78.5 &76.0 &\cellcolor{gray!40}80.1 &78.2 &79.3 & \cellcolor[HTML]{FFE6CC}75.9          & \cellcolor[HTML]{FFB2B2}4.5         &  $\uparrow$ \cellcolor[HTML]{C6EFFC}7.8         \\
                           & SFGC &  {81.1}& 80.8 &78.8 &79.0& \cellcolor{gray!40}81.1 & {81.9}& 79.1&  \cellcolor[HTML]{FFE6CC}80.3         & \cellcolor[HTML]{FFB2B2}1.2          & $\uparrow$ \cellcolor[HTML]{C6EFFC}12.2            \\

\multirow{-5}{*}{Cora}  & GEOM &  {83.6} & 82.7 & 82.8 & 80.7  &\cellcolor{gray!40}{83.6} &  {83.7} &  {83.1} & \cellcolor[HTML]{FFE6CC}\textbf{82.9} & \cellcolor[HTML]{FFB2B2}1.0 & $\uparrow$ \cellcolor[HTML]{C6EFFC}\textbf{14.8}  \\ \cline{1-9}
\multirow{-5}{*}{\centering ($r$ = 2.60\%)}
                          & DC-Graph & 59.9& -& 60.0& 55.7 &\cellcolor{gray!40}59.8 &60.0 & 60.4 & \cellcolor[HTML]{FFE6CC}59.3          & \cellcolor[HTML]{FFB2B2}1.6           & \cellcolor[HTML]{C6EFFC}-               \\

                          & GCond & 62.2& 60.0 &63.4 &54.9& \cellcolor{gray!40}63.2 &62.6& 63.7&  \cellcolor[HTML]{FFE6CC}61.4          & \cellcolor[HTML]{FFB2B2}2.9          &  $\uparrow$ \cellcolor[HTML]{C6EFFC}2.1              \\
                           & SFGC & 65.1& 65.7 &63.9 &60.7& \cellcolor{gray!40}65.1 &64.8& 64.8&  \cellcolor[HTML]{FFE6CC}64.3          & \cellcolor[HTML]{FFB2B2}1.6         & $\uparrow$ \cellcolor[HTML]{C6EFFC}5.0             \\
           
\multirow{-5}{*}{Ogbn-arxiv}   & GEOM &  {68.8} & 66.4 &65.8 & 62.5 &\cellcolor{gray!40}68.8 & 68.9 & 66.4  & \cellcolor[HTML]{FFE6CC}\textbf{66.8} & \cellcolor[HTML]{FFB2B2}2.1  & $\uparrow$ \cellcolor[HTML]{C6EFFC}\textbf{7.5}   \\ \cline{1-9}
\multirow{-5}{*}{\centering ($r$ = 0.25\%)}

                           & DC-Graph & 61.2 & - &61.4 &58.3& \cellcolor{gray!40}61.1 &60.9& 61.3&  \cellcolor[HTML]{FFE6CC}60.7          & \cellcolor[HTML]{FFB2B2}1.1         &  \cellcolor[HTML]{C6EFFC}-             \\
                           & SFGC &  {69.4}& 69.6 &68.7 &64.7& \cellcolor{gray!40}69.4 &69.4& 69.1&  \cellcolor[HTML]{FFE6CC}68.5          & \cellcolor[HTML]{FFB2B2}1.6         &  \cellcolor[HTML]{C6EFFC}$\uparrow$7.8            \\
           
\multirow{-3.7}{*}{Ogbn-arxiv}   & GEOM &  {71.2} & 70.0 &69.1 & 64.5 &\cellcolor{gray!40}{71.4} & 71.1 & 69.6  & \cellcolor[HTML]{FFE6CC}\textbf{69.6} & \cellcolor[HTML]{FFB2B2}2.2  & $\uparrow$ \cellcolor[HTML]{C6EFFC}\textbf{8.9}   \\ \cline{1-9}
\multirow{-3.5}{*}{\centering ($r$ = 5.00\%)}

                          & DC-Graph &43.1 & -& 45.7 &43.8 &\cellcolor{gray!40}45.9 &45.8 &45.6  & \cellcolor[HTML]{FFE6CC}45.0        & \cellcolor[HTML]{FFB2B2}1.1         & \cellcolor[HTML]{C6EFFC}-           \\
                           & GCond & 44.8 & 40.1 & 45.9 & 42.8 & \cellcolor{gray!40}47.1& 46.2& 46.1 & \cellcolor[HTML]{FFE6CC}44.7         & \cellcolor[HTML]{FFB2B2}2.3           & $\downarrow$ \cellcolor[HTML]{C6EFFC}0.3              \\
                           & SFGC & 47.1 &  {45.3} & 40.7 & 45.4 &\cellcolor{gray!40}47.1&  {47.0}& 42.5 & \cellcolor[HTML]{FFE6CC}45.0        & \cellcolor[HTML]{FFB2B2}2.3           & \cellcolor[HTML]{C6EFFC}-             \\
\multirow{-5}{*}{Flickr}     & GEOM & 47.0 & 42.1 & 46.6 & 45.3 & \cellcolor{gray!40}{47.3} & {47.1} & {46.3} & \cellcolor[HTML]{FFE6CC}\textbf{46.0} & \cellcolor[HTML]{FFB2B2}1.7  & $\uparrow$ \cellcolor[HTML]{C6EFFC}\textbf{1.0}   \\ \cline{1-9}
\multirow{-5}{*}{\centering ($r$ = 0.50\%)}
                           & DC-Graph & 50.3 & - & 81.2 & 77.5 & \cellcolor{gray!40}89.5 & 89.7 & 90.5 & \cellcolor[HTML]{FFE6CC}79.8         & \cellcolor[HTML]{FFB2B2}14.0          & \cellcolor[HTML]{C6EFFC}-              \\
                           & GCond & 42.5 & 60.2 & 87.8 & 75.5 & \cellcolor{gray!40}89.4 & 89.1 & 89.6 & \cellcolor[HTML]{FFE6CC}76.3          & \cellcolor[HTML]{FFB2B2}17.1          &  $\downarrow$ \cellcolor[HTML]{C6EFFC}3.5             \\
                          & SFGC & 89.5 & 87.1 & 88.3 & 82.8 & \cellcolor{gray!40}89.7 & 90.3 & 89.5 & \cellcolor[HTML]{FFE6CC}88.2          & \cellcolor[HTML]{FFB2B2}2.4           & $\uparrow$ \cellcolor[HTML]{C6EFFC}8.4             \\
\multirow{-5}{*}{Reddit} & GEOM & 91.4 & 90.0 & 87.9 & 82.7 & \cellcolor{gray!40}91.4 & 91.4 & 89.3 & \cellcolor[HTML]{FFE6CC}\textbf{89.2}  & \cellcolor[HTML]{FFB2B2}2.9  & \cellcolor[HTML]{C6EFFC}$\uparrow$ \textbf{9.4}  \\ \cline{1-12}
\multirow{-5}{*}{\centering ($r$ = 0.10\%)}

                           & DC-Graph & 52.3 & - &84.1 &81.2& \cellcolor{gray!40}90.3 &90.6& 90.9&  \cellcolor[HTML]{FFE6CC}81.6          & \cellcolor[HTML]{FFB2B2}13.6         &  \cellcolor[HTML]{C6EFFC}-             \\

                           & SFGC & 91.6 & 90.3 &91.1 &85.3& \cellcolor{gray!40}91.9 &91.6& 90.9&  \cellcolor[HTML]{FFE6CC}90.4          & \cellcolor[HTML]{FFB2B2}2.1         &  \cellcolor[HTML]{C6EFFC}$\uparrow$8.8             \\
\multirow{-3.7}{*}{Reddit} & GEOM &  {93.9} &  {93.0} & 92.4 & 88.5 & \cellcolor{gray!40}{93.9} &  {93.8} & 92.7 & \cellcolor[HTML]{FFE6CC}\textbf{92.6}  & \cellcolor[HTML]{FFB2B2}1.8  & \cellcolor[HTML]{C6EFFC}$\uparrow$ \textbf{11.0}   \\ \cline{1-12}
\multirow{-3.5}{*}{\centering ($r$ = 5.00\%)}
\end{tabular}%
}
\vspace{-2.5em}
\end{table*}

\subsection{Setup}
\textbf{Datasets \& architectures.} 
We conduct experiments on three transductive datasets, \textit{i.e.}, Cora, Citeseer~\cite{kipf2016semi} and Ogbn-arxiv~\cite{hu2020open}, and two inductive datasets, \textit{i.e}, Flickr~\cite{zeng2020graphsaint} and Reddit~\cite{hamilton2017inductive}. For all five datasets, we use the public splits and setups. More details of each dataset can be found in Appendix~\ref{section: details_data}. We select APPNP~\cite{gasteiger2018predict}, GCN~\cite{kipf2016semi}, SGC~\cite{wu2019simplifying}, GraphSAGE~\cite{hamilton2017inductive}. Cheby~\cite{defferrard2016convolutional} and GAT~\cite{velivckovic2017graph}, as well as a standard MLP for cross-architecture experiments.

\textbf{Baselines.}
We compare our method to seven baselines: 
1) Coreset selection methods: 
Random; Herding~\cite{welling2009herding}; K-Center~\cite{farahani2009facility, sener2017active}.
2) State-of-the-art condensation methods: 
the graph-based variant DC-Graph of vision dataset condensation~\cite{zhao2020dataset}; gradient matching graph condensation method GCond~\cite{jin2022graph}; 
GCond-X, the variant of GCond, which do not optimize the structure of the condensed graph; 
trajectory matching graph condensation method SFGC~\cite{zheng2023structure}.

\begin{figure*}[t]
\centering
    \subfigure[]
    {\includegraphics[width=0.21\textwidth, angle=0]
    {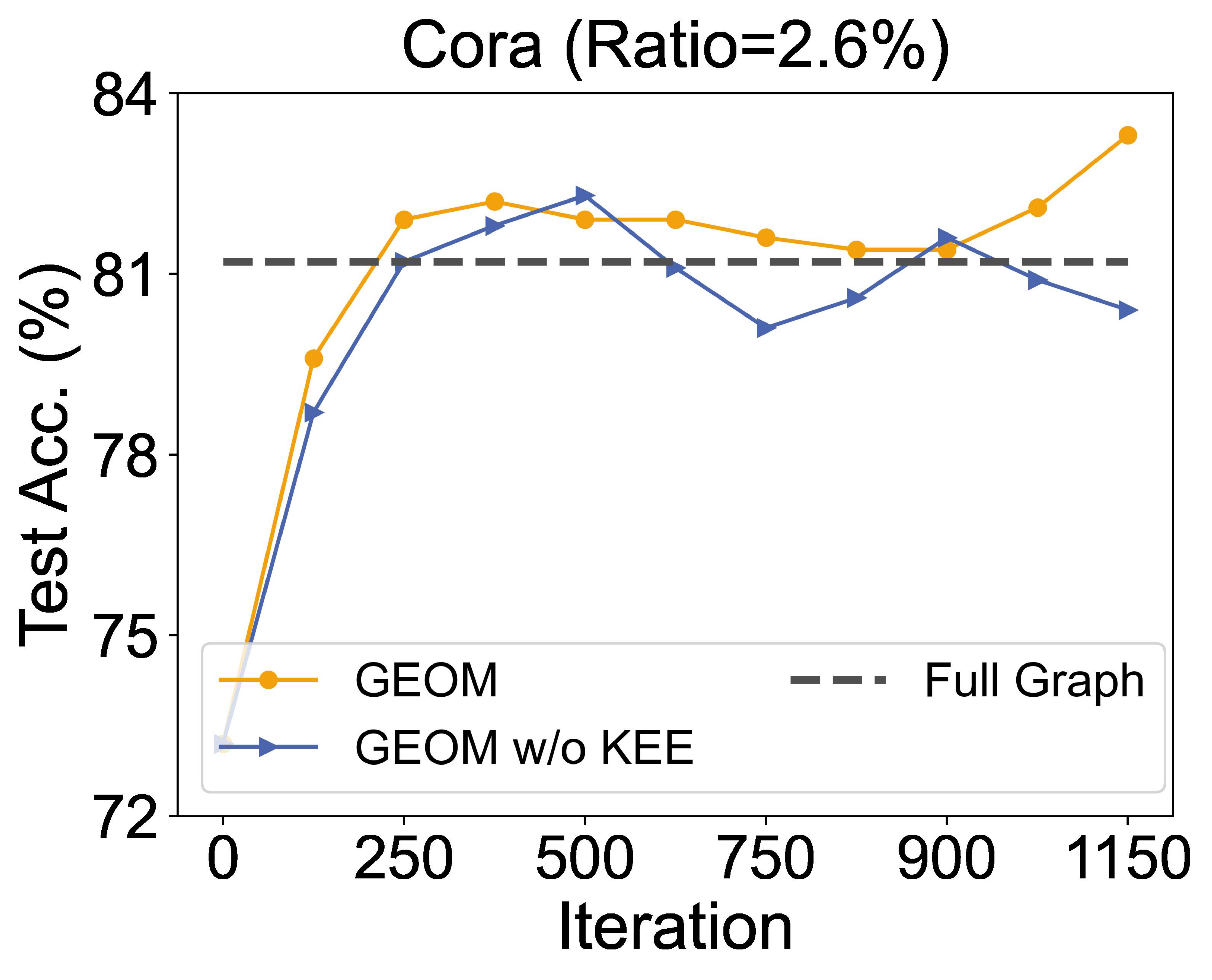}
    \label{fig:abl_cora}}
    \hspace{3.5mm}
    \subfigure[]
    {\includegraphics[width=0.21\textwidth, angle=0]
    {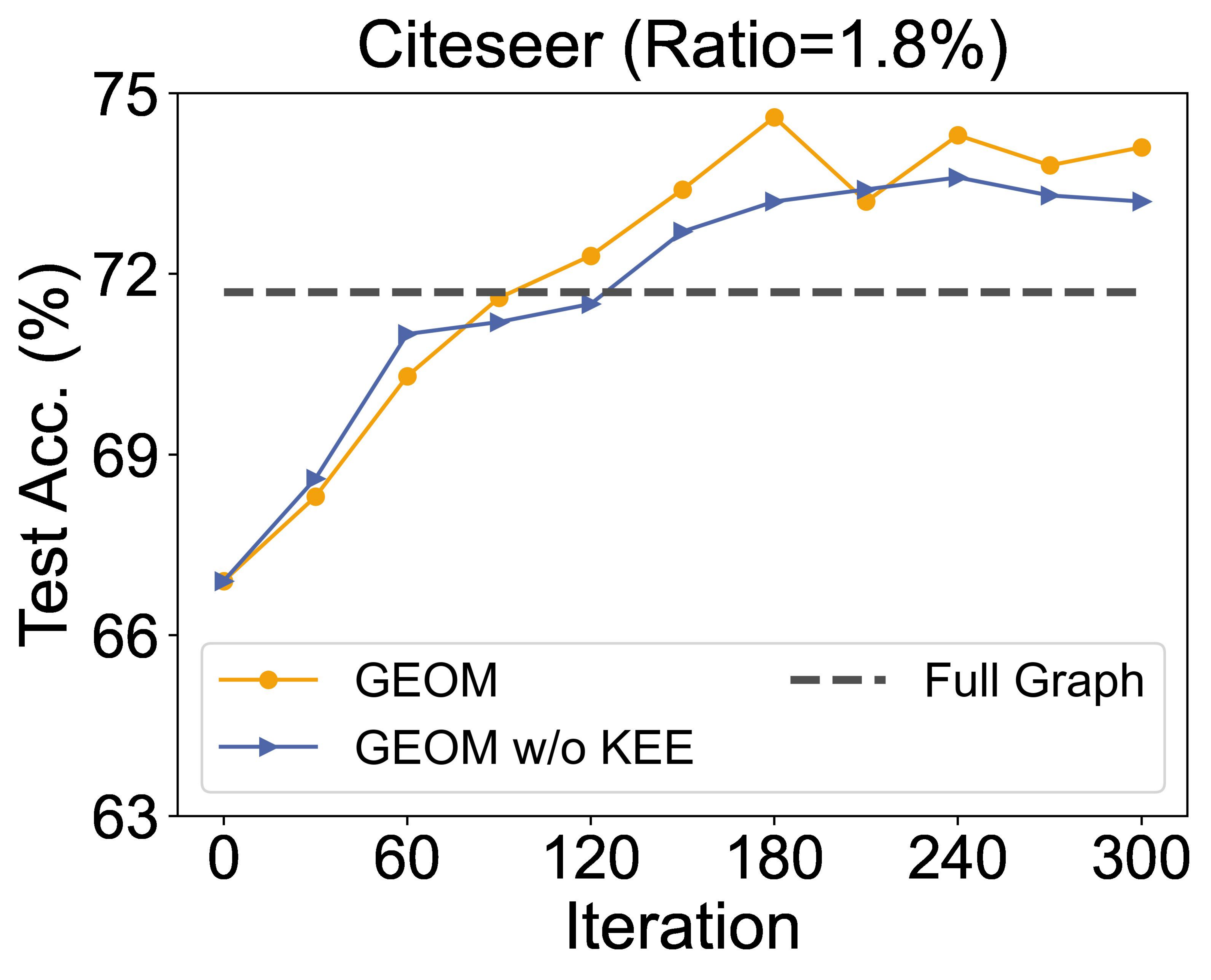}
    \label{fig:abl_red}}
    \hspace{3.5mm}
    \subfigure[]
    {\includegraphics[width=0.211\textwidth, angle=0]{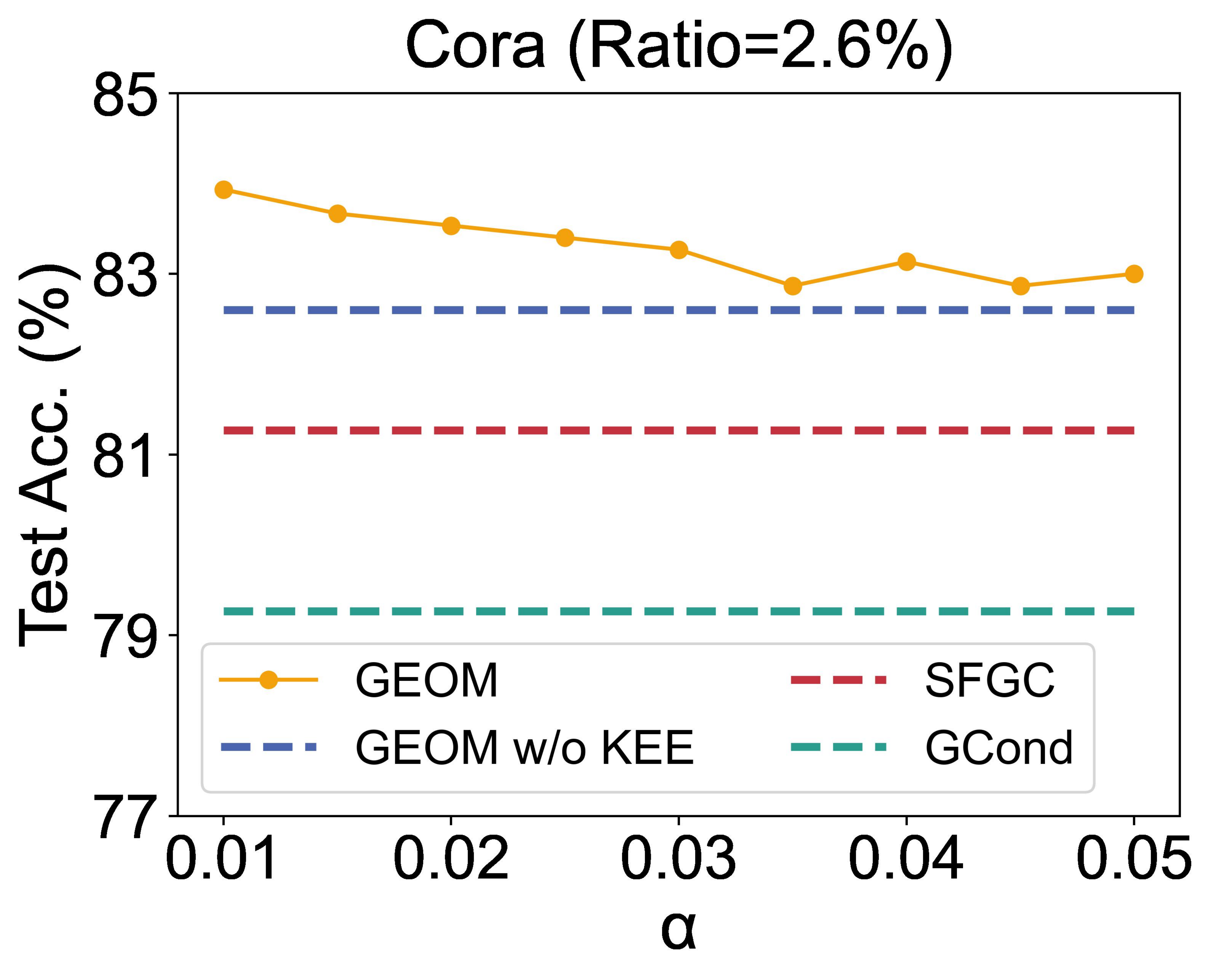}
    \label{fig:alpha_cora}}
    \hspace{3.5mm}
    \subfigure[]
    {\includegraphics[width=0.212\textwidth, angle=0]{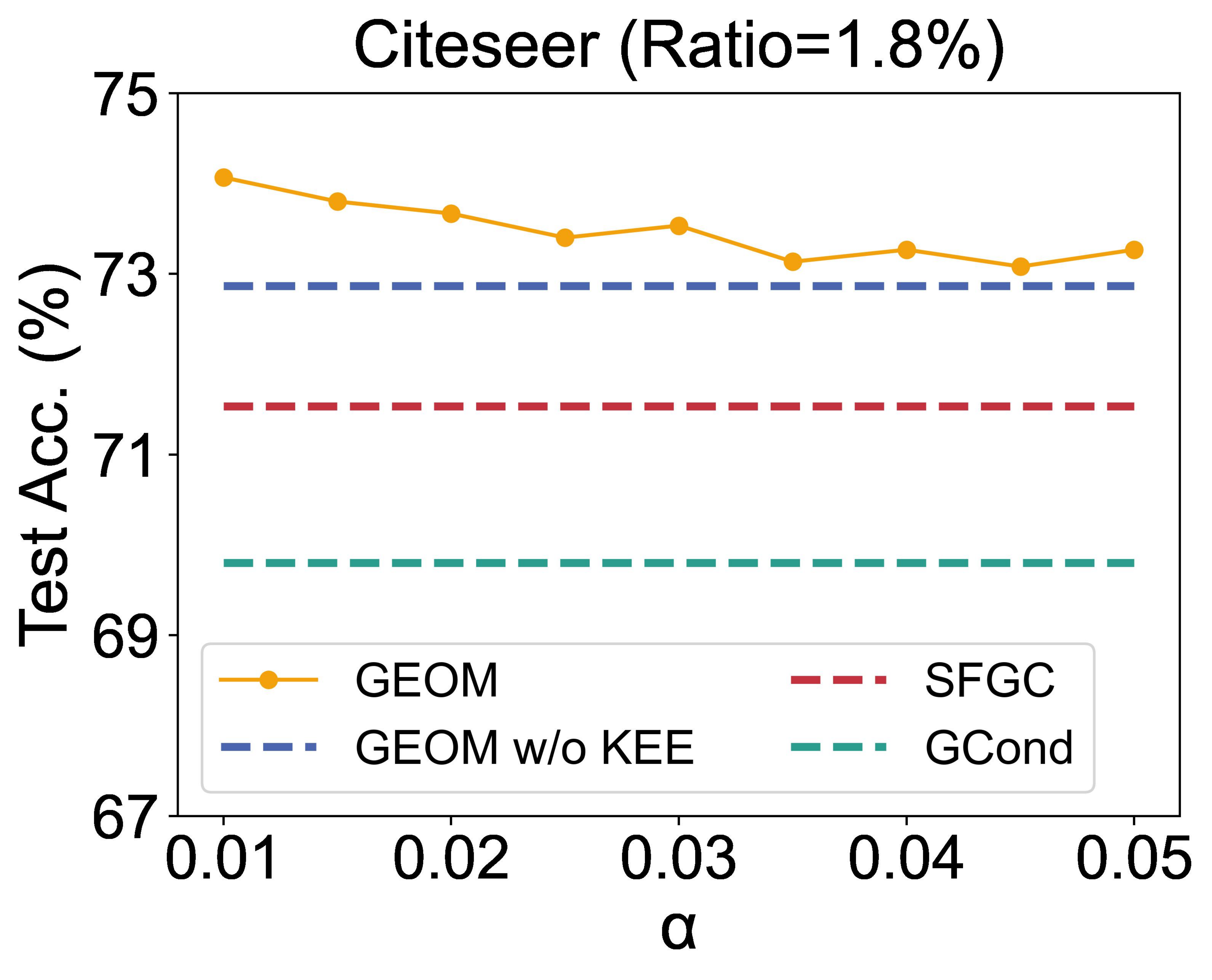}
    \label{fig:alpha_reddit}}
    \vspace{-10pt}
\caption{(a) and (b) illustrate the ablation study on whether to use the KEE. (c) and (d) illustrate the ablation of the tunable hyperparameter $\alpha$, which determines the weights of the optimization item generated by the KEE.}\label{fig:exp}
    \vspace{-8.0pt}
\end{figure*}

\textbf{Implementation \& evaluation.}
We initially employ the eight methods to synthesize condensed graphs, subsequently assessing the performance of GNNs across different datasets and architectures. 
In the \textit{condensation} phase, GNNs are both commonly-used GCN model~\cite{kipf2016semi}. 
In the \textit{evaluation} phase, we train a GNN on the condensed graph and then evaluate the GNN on the test set of the corresponding original graph dataset to get the performance. 
We report the average performance and variance on Table~\ref{tab:nc} with repeated 10 times experiments, where the GNN models used are all 2-layer GCN models with 256 hidden units. 
More hyper-parameter setting details are provided in Appendix~\ref{section: details}.

\subsection{Results}
\textbf{Node classification.}
We compare our method with the baselines across all condensation ratios on node classification, as reported in Table~\ref{tab:nc}. 
Our method achieves state-of-the-art results in 18 out of 19 experimental cases and brings non-trivial improvements up to 2.8\%. 
Notably, in all datasets, we achieve \textit{lossless} graph condensation below or at a 5\% condensation ratio for the first time. 
The results confirm that the proposed GEOM can provide more informative supervision signals from the original graph for optimizing the condensed graph in the \textit{condensation} phase, allowing us to get an optimal substitute for the original graph dataset.

\begin{figure}[ht]
    \centering
    \subfigure[Cora, $r$=5.2\%]
    {\includegraphics[width=0.4\textwidth, angle=0]
    {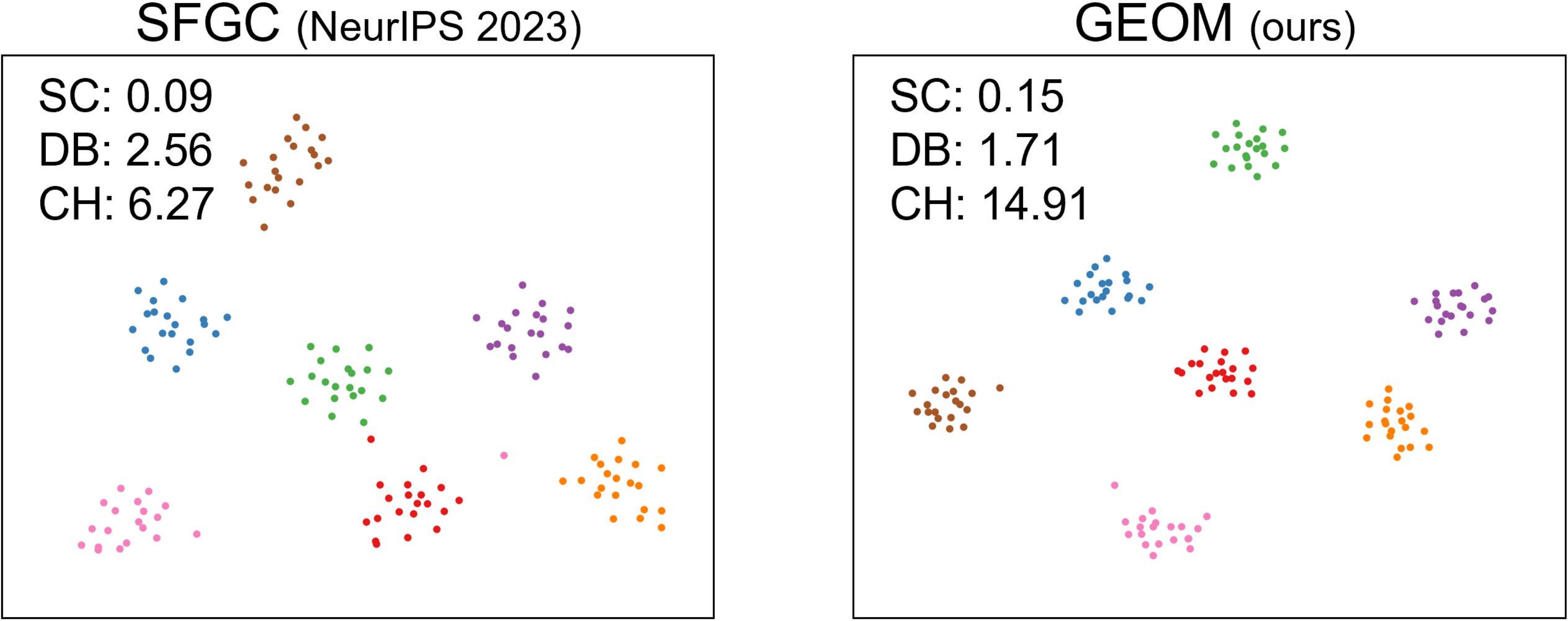}
    \label{fig:vis_cora}}
    \subfigure[Citeseer, $r$=3.6\%]
    {\includegraphics[width=0.4\textwidth, angle=0]{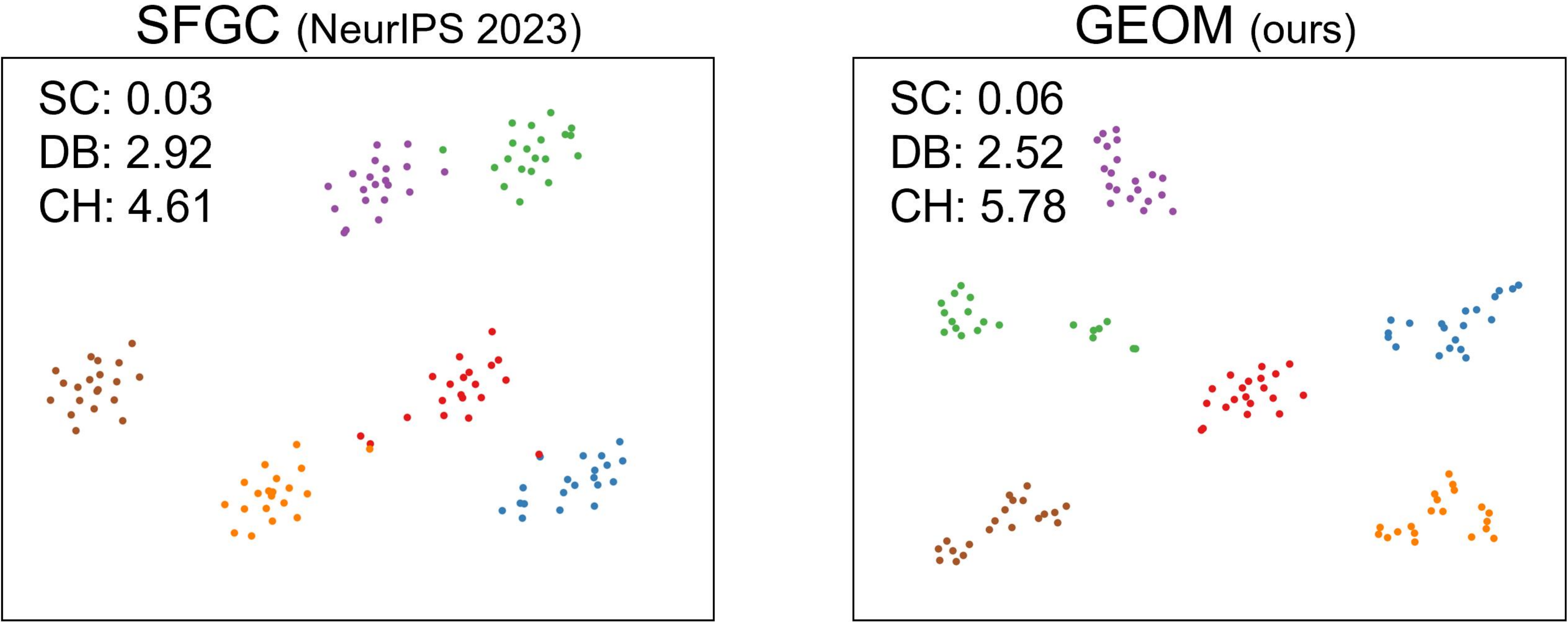}
    \label{fig:vis_citeseer}}
    \subfigure[Reddit, $r$=0.2\%]
    {\includegraphics[width=0.4\textwidth, angle=0]{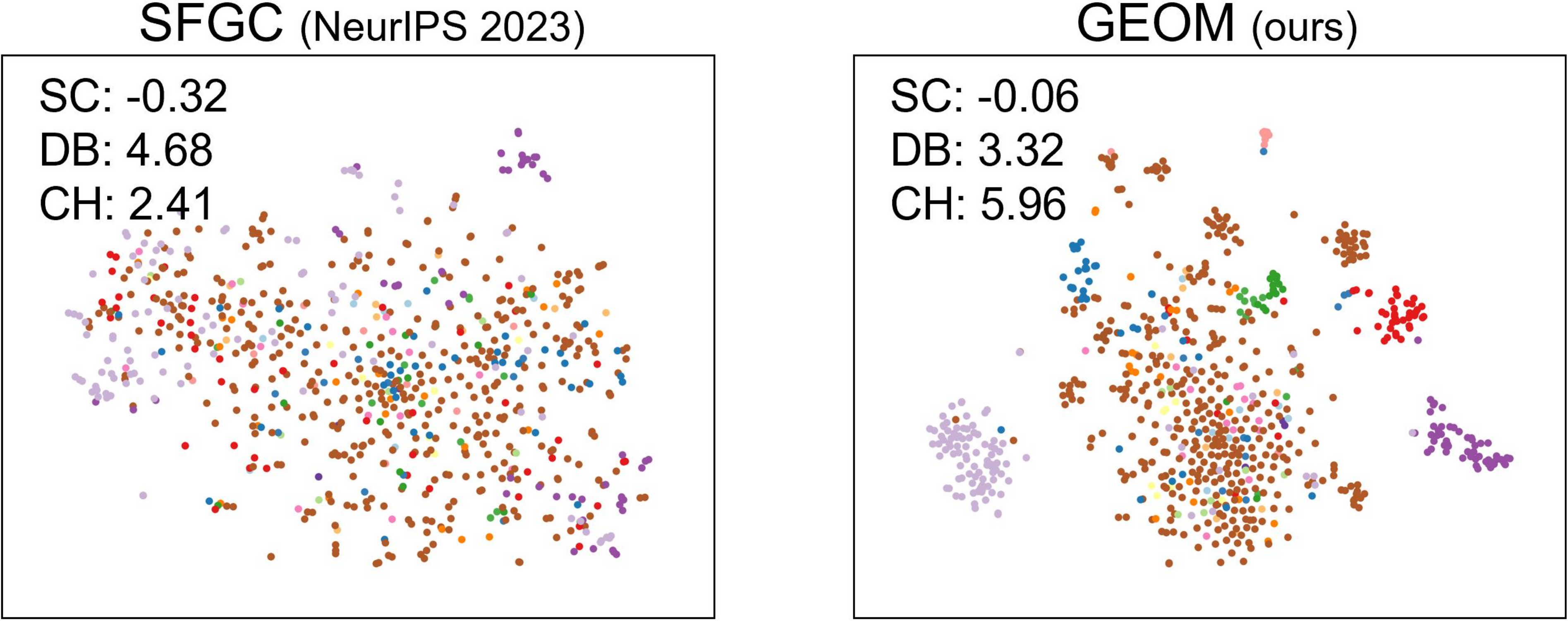} 
    \label{fig:vis_reddit}}
    \vspace{-10pt}
	\caption{T-SNE visualization on the condensed graph. Nodes of the same class are in the same color. SC$\uparrow$, DB$\downarrow$, and CH$\uparrow$ in the figure refer to the Silhouette Coefficient, Davies-Bouldin Index, and Calinski-Harabasz Index respectively. $\uparrow$ and $\downarrow$ denote the clustering pattern is better when the value is higher or lower.}\label{fig:vis_syngraph}
    \vspace{-20.0pt}
\end{figure}

\textbf{Cross-architecture generalization.}
We evaluate the test performance of our condensed graphs across different GNN architectures. 
The results are reported in Table~\ref{tab:cross},
showing that our condensed graphs do not overfit in the GNN architecture used in the \textit{buffer} and \textit{condensation} phase, they can generalize well on all other GNN architectures in our experiments.
It is noteworthy that our condensed graph can even achieve lossless performance in 20 out of 35 cases, which opens up possibilities for the widespread real-world application of graph condensation.
As our approach shows that it is possible to condense graphs without tailoring the condensation to specific GNN architectures separately.
The performance of the whole dataset across different architectures can be found in Appendix~\ref{section: details_data}. 
%


\subsection{Ablation}
\paragraph{Evaluating expanding window matching.}
We compare expanding window matching to two fixed matching (Fixed1 with its starting point fixed at 0, Fixed2 at a later stage) and a sliding window matching.
Additionally, we conduct the ablation on whether to use CL in the \textit{buffer} phase. 
The results in Table~\ref{tab:abl} show that: 
1) Utilizing expanding window matching solely in the \textit{condensation} phase can yield better results compared to other matching strategies;
2) Fixed matching can not collaborate well with the expert trajectories trained with CL;
3) The combination of using CL and expanding window matching can bring non-trivial improvements.

\begin{table}[ht]
  \centering
    \footnotesize
    \fontsize{8.5}{9}\selectfont
    \vspace{-20.0pt}
  \caption{Ablation on expanding window matching.}
   \vspace{2.0pt}
  \label{tab:abl}
  \begin{tabular}{ccccc}
    \toprule
    \multirow{2}{*}{Matching Range} & \multicolumn{2}{c}{Cora}& \multicolumn{2}{c}{Citeseer} \\ 
    \cmidrule{2-5}
     & w/o CL & CL& w/o CL & CL \\ 
    \midrule
    Fixed1 & 82.3$_{\pm 0.4}$ & 82.6$_{\pm 0.2}$ &  72.1$_{\pm 0.2}$ & 72.4$_{\pm 0.1}$  \\
    Fixed2 & 81.8$_{\pm 0.5}$ & 81.9$_{\pm 0.3}$ & 71.3$_{\pm 0.1}$ & 71.3$_{\pm 0.3}$  \\
    Sliding & 80.8$_{\pm 0.3}$ & 81.1$_{\pm 0.4}$ & 70.7$_{\pm 0.1 }$ & 71.0$_{\pm 0.2}$ \\
    Expanding & 82.9$_{\pm 0.2}$ & \textbf{83.6$_{\pm 0.3}$} & 73.2$_{\pm 0.3}$ & \textbf{74.3$_{\pm 0.1}$} \\
    \bottomrule
  \end{tabular}
  \vspace{-8.0pt}
\end{table}



\textbf{Evaluating knowledge embedding extractor.}
We first study the \textit{condensation} phase with and without KEE under the optimal $\alpha$. As illustrated in Fig.~\ref{fig:abl_cora} and~\ref{fig:abl_red}, due to the stable guidance KEE offers, it can further enhance the performance of the condensed graph when the previous optimization hits a bottleneck.
Moreover, as shown in Fig.~\ref{fig:alpha_cora} and ~\ref{fig:alpha_reddit}, simply determining the order of the hyper-parameter $\alpha$ can make KEE aid in condensation.



\subsection{Visualization}
\vspace{-0.3em}
We present the visualization results for all five datasets in Fig.~\ref{fig:vis_syngraph}, 
we can observe that our condensed graphs on Cora and Citeseer show clear clustering patterns without inter-class mixing, while that of SFGC still lack clarity in the separation between classes. 
This gap becomes more apparent in larger datasets, where the condensed graph of SFGC fails to show any significant clustering patterns.
Conversely, our condensed graph still manages to represent clear patterns between classes and clusters within the same class. 

To measure the clustering patterns more precisely, we employee three different metrics designed to assess the distinctness of the clustering pattern comprehensively: Silhouette Coefficient~\cite{rousseeuw1987silhouettes}, Davies-Bouldin Index~\cite{davies1979cluster} and Calinski-Harabasz Index~\cite{calinski1974dendrite}.
Our condensed graph is significantly more competitive across these three evaluation metrics.
This demonstrates that our matching strategy effectively captures the patterns of both easy and difficult nodes in the original graph. 
More visualization results on other ratios and datasets can be found in Appendix~\ref{appendix_vis}.

\vspace{-0.5em}
\section{Related Work}
\vspace{-0.3em}
\textbf{Dataset Distillation \& Graph Condensation.}
Dataset distillation (DD) is a technique to reduce the size and complexity of large-scale datasets for training deep neural networks~\cite{qin2023infobatch}. Methods in DD are majorly based on matching, such as matching gradients~\cite{zhao2020dataset,liu2022dataset, liu2023slimmable, zhang2023accelerating}, distribution~\cite{zhao2023dataset, wang2023dim}, feature~\cite{wang2022cafe} and training trajectories~\cite{cazenavette2022dataset, du2023minimizing, guo2023towards}, which has led to a wide application in lots of downstream tasks~\cite{masarczyk2020reducing, rosasco2021distilled, wang2021rethinking, liu2023cat, gao2024enhancing}. Following DD, Graph Condensation compresses the graph dataset by matching gradients~\cite{jin2022condensing, jin2022graph, yang2024does,zhang2024Crafting}, distribution~\cite{liu2022graph} and training trajectories~\cite{zheng2023structure}. For a thorough review, we refer the reader to a recent survey~\cite{hashemi2024comprehensive}.
However, there persists a significant performance gap to get \textit{lossless graph condensation}. 
\vspace{-0.5em}
\section{Conclusion} \label{conclusion}
In this work, we propose GEOM, a novel method for graph condensation via expanding window matching. GEOM outperforms the state-of-the-art methods across various datasets and architectures, making the first attempt toward \textit{lossless graph condensation}.

\textbf{Limitations and future work.} GEOM still relies on deriving trajectories in advance, which incurs additional computational costs for expert GNNs training. We will explore improving the efficiency of condensing graphs in the future.


\section*{Impact Statement} \label{Impact Statements}
\textbf{Ethical impacts.} There are no ethical issues in our paper, including its motivation, designs, experiments, and used data. 
The goal of the proposed GEOM is to advance the field of graph condensation.

\textbf{Expected societal implications.}
Training GNNs on real-world graph data comes with a high computational cost. 
Graph condensation achieves more efficient computation by reducing the size of the graph data. 
This helps in reducing energy consumption in computing devices, thereby reducing carbon emissions, which is highly beneficial for sustainability and environmental conservation.
\section*{Acknowledgement}
This research is supported by the National Research Foundation, Singapore under its AI Singapore Programme (AISG Award No: AISG2-PhD-2021-08-008). Yang You's research group is being sponsored by NUS startup grant (Presidential Young Professorship), Singapore MOE Tier-1 grant, ByteDance grant, ARCTIC grant, SMI grant (WBS number: A-8001104-00-00),  Alibaba grant, and Google grant for TPU usage.

\section*{Contribution Statement}
In this paper, the authors made the following contributions:
\begin{itemize}

\item Yuchen Zhang proposed GEOM and implemented it. He also designed the experiments, conducted part of the experiments, analyzed the results, designed the entire logic, plotted the figures, and wrote the majority of the manuscript.

\item Tianle Zhang conducted part of the experiments and recorded all the experimental results. He also wrote the theoretical analysis of GEOM.

\item Kai Wang designs the logic of the abstract and introduction with Yuchen, modifies the abstract and introduction sentence-by-sentence with Yuchen, improves the storytelling, and organizes the rebuttal (analyzing the questions and replying to the reviewers) with Yuchen and Tianle.

\item Ziyao Guo, Yuxuan Liang, and Xavier Bresson provided critical feedback and revised the manuscript.

\item Wei Jin and Yang You supervised the project and provided valuable feedback about the work.

\end{itemize}

\bibliography{example_paper}
\bibliographystyle{icml2024}

\newpage
\appendix
\onecolumn


\section{Dataset Details}\label{section: details_data}

\subsection{Statistics of Dataset}
The performance assessment of our method encompasses an array of datasets, comprising three transductive datasets:  Cora, Citeseer~\cite{kipf2016semi}, Ogbn-arxiv~\cite{hu2020open} and two inductive datasets: Flickr~\cite{zeng2020graphsaint} and Reddit~\cite{hamilton2017inductive}). These datasets are sourced from PyTorch Geometric~\cite{fey2019fast}, with publicly accessible splits consistently applied across all experimental setups. We first set three condensation ratios for each dataset, consistent with the setting before~\cite{zheng2023structure, jin2022graph}, and for Ogbn-arxiv and Reddit, we add comparisons with two additional larger condensation ratios. Dataset statistics are shown in Table \ref{tab:ncdatasets}
\vspace{-10pt} 
\begin{table*}[ht]
  \centering
  \caption{Dataset statistics. The first three are transductive datasets and the last two are inductive
datasets.}
\vspace{3pt} 
  \label{tab:datasets}
  \fontsize{10}{11}\selectfont
   \begin{tabular}{c@{\hspace{2em}}c@{\hspace{2em}}c@{\hspace{2em}}c@{\hspace{2em}}c@{\hspace{2em}}c}
    \toprule
    Dataset & \#Nodes & \#Edges & \#Classes & \#Features & Training/Validation/Test\\
    \midrule
    Cora & 2,708 & 5,429 & 7 & 1,433 & 140/500/1000 \\
    Citeseer & 3,327 & 4,732 & 6 & 3,703 & 120/500/1000\\
Ogbn-arxiv & 169,343 & 1,166,243 & 40 & 128 & 90,941/29,799/48,603\\
\midrule
Flickr & 89,250 & 899,756 & 7& 500 & 44,625/22312/22313\\
Reddit & 232,965 & 57,307,946 & 210 & 602 & 15,3932/23,699/55,334\\
    \bottomrule
    \end{tabular}%
  \label{tab:ncdatasets}%
\end{table*}

\subsection{Performance of Dataset}
We show the performances of various GNNs on the original graph datasets in Table \ref{tab:cross_whole}. Notably, our approach achieves lossless performance for 20 combinations across 35 combinations of five datasets and seven architectures

  \begin{table*}[ht]
      \centering
      \small
        \caption{Performances of various GNNs on original graphs. The \underline{underline} signifies that the performance of our synthetic graph is the same as or better than the original graph dataset.}
      \begin{tabular}{c|ccccccc}
       \toprule
           & MLP&GAT& APPNP& Cheby & GCN & SAGE& SGC\\
          \midrule
    Citeseer &\underline{69.1} & \underline{70.8}& \underline{71.8}& \underline{70.2} & \underline{71.7} & \underline{70.1}& \underline{71.3}\\
    \midrule
    Cora & \underline{76.9} & 83.1 & 83.1& 81.4 & \underline{81.2} & \underline{81.2}& \underline{81.4}\\
    \midrule
    Ogbn-arxiv  & \underline{67.8} & 71.5& 71.2&71.4& \underline{71.4} &71.5& 71.4 \\
        \midrule
    Flickr & 47.6 & 44.3 &  47.3 & 47.0  &  \underline{47.1} & \underline{46.1} & \underline{46.2}\\
    \midrule
    Reddit &\underline{92.6} &  \underline{91.0} & 94.3&93.1 &  \underline{93.9} &\underline{93.0}& 93.5\\

            \bottomrule
      \end{tabular}
    
      \label{tab:cross_whole}
  \end{table*}
\section{Datails of the Training Scheduler}\label{training sch}
After assessing node difficulty, we implement a curriculum-based approach to train a GNN model. Following CLNode~\cite{wei2023clnode}, We introduce a continuous training scheduler that gradually increases the difficulty level in the curriculum. Specifically, we organize the training set by the ascending node difficulty. Then, using a pacing function $h(t)$ to map each epoch to a certain value $\lambda_t$, where $0< \lambda_t\leq1$, indicating the proportion of the training nodes selected for the training subset at epoch $\zeta$. $\lambda$ represents initial proportion of the available nodes, $\zeta$ is the epoch when $h(t)$ attains the value of 1. The pacing functions are as follows:
\begin{itemize}
\item linear:
\begin{equation}\label{eqn-15}
h(t)=\min(1,\lambda+(1-\lambda)\frac{t}{\zeta}).
\end{equation}
\item root:
\begin{equation}\label{eqn-16}
h(t)=\min(1,\sqrt{\lambda^2+(1-\lambda^2)\frac t\zeta}).
\end{equation}
\item geometrics:
\begin{equation}\label{eqn-17}
h(t)=\min(1,2^{log_2\lambda-log_2\lambda\frac t\zeta}).
\end{equation}
\end{itemize}
Furthermore, we do not halt the training as soon as $t$ equals $\zeta$, since at this point, 
the knowledge of difficult nodes might not be fully embedded into the expert trajectories. Therefore, we continue to train the model for an additional period to ensure that the information of these difficult nodes is also embedded into the expert trajectories.



\section{Theoretical Analysis}
\subsection{Proof of Theorem 2.4}\label{section: details_th}
\begin{theorem}
During the \textit{evaluation} phase, the accumulated error at any stage is determined by its initial value, the sum of matching error, and the initialization error starting from the second stage.
\begin{equation}
\epsilon_{n+1}  =\sum^{n}_{i=1}I({\theta^*_{i,0},\epsilon_{i-1}})+ \sum^{n}_{i=0}\delta_{i+1} +\epsilon_{0}.
\end{equation}
\end{theorem}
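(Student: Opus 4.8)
The plan is to unroll the recursion governing the accumulated error across evaluation stages and use induction on $n$. First I would fix notation carefully: let $\epsilon_n$ denote the accumulated error at the start of stage $n$ (the discrepancy between the parameters produced by training on the condensed graph and the expert trajectory at the corresponding checkpoint), let $\delta_{i+1}$ denote the initialization error incurred when stage $i+1$ begins (the gap introduced by re-anchoring to a new expert checkpoint), and let $I(\theta^*_{i,0},\epsilon_{i-1})$ denote the matching error accrued during stage $i$, which I expect to be defined (earlier in the paper, in the lemma or setup preceding this theorem) as a function of the expert's starting parameters $\theta^*_{i,0}$ for that stage and the incoming accumulated error $\epsilon_{i-1}$. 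The key structural fact I would rely on — presumably established just before this theorem — is a one-step recurrence of the form
\begin{equation}
\epsilon_{i+1} = \epsilon_i + I(\theta^*_{i,0},\epsilon_{i-1}) + \delta_{i+1},
\end{equation}
valid for $i \ge 1$, together with a base case expressing $\epsilon_1$ in terms of $\epsilon_0$ and $\delta_1$ (and possibly an initial matching term).

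Given that recurrence, the argument is a straightforward telescoping. Starting from $\epsilon_{n+1}$ and applying the one-step relation repeatedly, I would write
\begin{equation}
\epsilon_{n+1} = \epsilon_n + I(\theta^*_{n,0},\epsilon_{n-1}) + \delta_{n+1} = \epsilon_{n-1} + I(\theta^*_{n-1,0},\epsilon_{n-2}) + I(\theta^*_{n,0},\epsilon_{n-1}) + \delta_n + \delta_{n+1},
\end{equation}
and continuing down to the base case, the $\epsilon$ terms collapse to the single residual $\epsilon_0$ while the matching errors and initialization errors accumulate into the two sums $\sum_{i=1}^{n} I(\theta^*_{i,0},\epsilon_{i-1})$ and $\sum_{i=0}^{n}\delta_{i+1}$. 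To make this rigorous I would phrase it as an induction: the base case $n=0$ (or $n=1$) reduces to the defining relation for $\epsilon_1$, and the inductive step applies the one-step recurrence once more to $\epsilon_{n+1}$ and substitutes the inductive hypothesis for $\epsilon_n$, with the index bookkeeping on the two summations checked explicitly so that the matching sum runs $1$ to $n$ and the initialization sum runs $0$ to $n$.

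The main obstacle I anticipate is not the telescoping itself but pinning down the precise meaning and range of validity of the one-step recurrence — in particular, why the initialization error $\delta_{i+1}$ only enters "starting from the second stage" and how the first stage's contribution is absorbed into $\epsilon_0$ versus a $\delta_1$ term. I would need to be careful about the boundary: whether the indexing convention has $\delta_1$ denoting the error at the very first re-anchoring (so it is genuinely present in the $\sum_{i=0}^n \delta_{i+1}$ sum as the $i=0$ term) and whether $\epsilon_0$ is the error at initialization before any matching occurs. Getting these edge cases consistent with the statement's claim that "the accumulated error at any stage is determined by its initial value, the sum of matching error, and the initialization error starting from the second stage" is the delicate part; once the recurrence and its base case are stated correctly, the rest is mechanical summation.
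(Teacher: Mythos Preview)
Your telescoping/induction plan for the final step is exactly what the paper does, but two issues need correcting.

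First, you have the roles of $I$ and $\delta$ reversed. In the paper's setup, $\delta_{n+1}=\Theta_S(\theta^*_{n,0},q)-\Theta_T(\theta^*_{n,0},p)$ is the \emph{matching} error: the discrepancy between the student update (on the condensed graph) and the expert update (on the original graph) when both start from the same expert checkpoint. The term $I(\theta^*_{n,0},\epsilon_n)=\Theta_S(\theta^*_{n,0}+\epsilon_n,q)-\Theta_S(\theta^*_{n,0},q)$ is the \emph{initialization} error: the sensitivity of the student update to perturbing its starting point by the incoming accumulated error. Your description of $\delta$ as a ``re-anchoring gap'' is not what it means here.

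Second, and more substantively, you assume the one-step recurrence is handed to you by a prior lemma. It is not: deriving that recurrence \emph{is} the proof. The paper writes $\epsilon_{n+1}=\hat\theta_{n,q}-\theta^*_{n,p}$, expands both endpoints via the update operators $\Theta_S,\Theta_T$ applied to $\hat\theta_{n,0}=\theta^*_{n,0}+\epsilon_n$ and $\theta^*_{n,0}$ respectively, and then adds and subtracts $\Theta_S(\theta^*_{n,0},q)$ to obtain the three-term split
\[
\epsilon_{n+1}=(\hat\theta_{n,0}-\theta^*_{n,0})+\bigl(\Theta_S(\theta^*_{n,0}+\epsilon_n,q)-\Theta_S(\theta^*_{n,0},q)\bigr)+\bigl(\Theta_S(\theta^*_{n,0},q)-\Theta_T(\theta^*_{n,0},p)\bigr)=\epsilon_n+I+\delta_{n+1}.
\]
This add-and-subtract decomposition is the actual content; once it is in place, the unrolling down to $\epsilon_0$ is immediate and proceeds exactly as you outlined. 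Your proposal currently outsources this step, so you should fill it in explicitly rather than rely on an unspecified earlier result.
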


\begin{proof}
For the stage directly matched in the condensation process, we assume that its matching error can be reduced to a negligible value. Assuming the sum of matching errors for the remaining segments is $\mu$.

\begin{equation} \label{eqn-18}
\begin{split}
\epsilon_{n+1}  &= \hat{\theta}_{n,q} -  \theta^{*}_{n,p}\\
&=(\hat{\theta}_{n,0} + \Theta_S(\hat{\theta}_{n,0},q))
- (\theta^{*}_{n,0}+\Theta_T(\theta^{*}_{n,0},p))\\
&=
(\hat{\theta}_{n,0} + \Theta_S(\theta^{*}_{n,0}+\epsilon_{n},q))
- (\theta^{*}_{n,0}+\Theta_T(\theta^{*}_{n,0},p))\\
&=
(\hat{\theta}_{n,0}- \theta^{*}_{n,0})+(
 \Theta_S(\theta^{*}_{n,0}+\epsilon_{n},q)
-\Theta_S(\theta^{*}_{n,0},q))
+(\Theta_S(\theta^{*}_{n,0},q)
- 
\Theta_T(\theta^{*}_{n,0},p))
\\
&=
\epsilon_{n}  + I({\theta^*_{n,0},\epsilon_t})+\delta_{n+1}
\\
&=\sum^{n}_{i=1}I({\theta^*_{i,0},\epsilon_{i-1}})+ \sum^{n}_{i=0}\delta_{i+1} +\epsilon_{0},
\end{split}
\end{equation}


As shown in Equation \ref{eqn-18}, the accumulated error during the evaluation process can be represented as the result of the summation of the initial accumulated error, the sum of the matching errors, and the accumulated errors, except for those that have been reduced.
\end{proof}

\subsection{Detailed Analysis}
In the meta-matching method proposed by SFGC, only one segment of the expert trajectory is selected for trajectory matching. This approach can only utilize a small part of the information in the whole training trajectory. In our method, there are two improvements in the condensation matching phase. 
Firstly, we use an expanding window starting from 0 for matching, which means during the \textit{condensation} phase, more stages will be matched, resulting in a smaller matching error $\mu^{'}$,
and the student network is trained multiple times starting from \(\theta^*_{0,0}\), thus incorporating \(\epsilon_{0}\) into the optimization. Note that when \(t=0\), the definitions of accumulated error and matching error are the same, and due to the optimization of \(\epsilon_{0}\), \(I(\theta^*_{1,0},\epsilon_{0})\) is also optimized simultaneously~\cite{du2023minimizing}.
More importantly, previous research has shown that curriculum learning can generate flatter training trajectories~\cite{sinha2020curriculum, sitawarin2021sat, krishnapriyan2021characterizing, zhang2021curriculum}, which can optimize \(I(\theta^*_{n,0},\epsilon_{n-1})\) efficiently~\cite{du2023minimizing}. 

We denote the optimized accumulated error and initialization error as \(\epsilon^{'}_{n}\) and \(I'\) respectively.
Assuming $||I(\theta^*_{1,0},\epsilon_{0}) - I'(\theta^*_{1,0},\epsilon_{0})||=\tau_1>0$,  $||\epsilon_{0} - \epsilon^{'}_{0}||\geq\tau_2>0$,  $||I(\theta^*_{i,0},\epsilon_{i-1}) - I'(\theta^*_{i,0},\epsilon_{i-1})||=\tau_3>0$ and $||\mu - \mu^{'}||=\tau_4>0$, we have
\begin{equation}
\epsilon^{*'}_{n} = \epsilon_{n} - \tau_1 - \tau_2 - (t^{*}-2)\tau_3 - \tau_4< \epsilon^{*}_{n}.
\end{equation}
This implies that our method has a smaller accumulated error during the \textit{evaluation} phase, resulting in better performances.

\section{Training Samples Analysis}\label{section: samples}
In our quest to identify nodes that play a dominant role in the formation of expert trajectories, we train each node in the training set sequentially through a GNN and record the gradient values generated by each node. At the same time, based on a ranking of difficulty, we classify the lowest 70\% of nodes in terms of difficulty scores as easy nodes and the highest 30\% as difficult nodes and compute the average gradients for easy and difficult nodes. 
As illustrated in Fig.~\ref{fig:arxiv grad of SFGC}, due to the challenge GNNs face in learning clear representations from these difficult nodes, larger gradients are produced, and the GNN tends to focus more on these nodes during training. Consequently, in expert trajectories, the supervision signals from difficult nodes are more emphasized. 

To explore the distinct guiding roles of these nodes during the condensation phase, we train the GNN with different ratios of easy to difficult nodes (since we need to control the ratio while maintaining a consistent number of nodes used, we must select subsets from the whole training set) to form expert trajectories. As shown in Fig.~\ref{fig:Performance of different nodes1} and~\ref{fig:Performance of different nodes2}, the supervision role of easy nodes is essential for optimizing the condensed graph; relying solely on difficult nodes is insufficient for optimization, as they rarely contain the general patterns of the original graph.

\section{Time Complexity Analysis}
\textbf{Time complexity}. 
We first measure the time complexity of the GCond, SFGC, and GEOM.
For simplicity, let the number of GCN layers in the adjacency matrix generator be $L$, the number of the sampled neighbors per node be $r$ and all the hidden units be $d$. The number of nodes in the original graph dataset is $N$ and the number of nodes in the condensed graph dataset is $N^{'}$. 
In the forward process, training GCN on the original graph has a complexity of $O(r^{L}Nd^{2})$ and $O(LN^{\prime2}d+LN^{\prime}d)$ on the condensed graph.
For GCond, it has time complexity of $TKO(LN^{\prime2}d+LN^{\prime}d)+TKO(N^{\prime2}d^{2})$, where $\zeta$ denotes the number of outermost loops, $K$ denotes the number of different initialization.
For SFGC and our method, the time complexity of matching training trajectories is about $TO(LN^{\prime}d^{2}+LN^{\prime}d)$, and offline training expert GCNs have a time complexity of $MO(r^{L}Nd^{2})$, where $M$ is the number of the experts. 
Although our method requires calculating the loss generated by a one-time forward on the expert GCN on the condensed graph, the small size of the condensed graph means that the time for a single forward on a 2-layer GCN is almost negligible. Additionally, the expert parameters do not require extra training to obtain, so we consider the time complexity of this operation as a constant $E$.

\textbf{Running time}.
Although our method introduces an additional constant $E$ in terms of time complexity, we have made improvements in how the condensed graph is evaluated, saving time, especially in the evaluation of larger-scale condensed graphs, thereby further enhancing the efficiency of our method. Concretely, the assessment of condensed graph-free data involves training a GNN model with it. The improved test performance of the GNN model in node classification at a particular meta-matching step suggests superior quality of the condensed data at that stage. 

Consequently, this evaluation process requires training a GNN model from the ground up while evaluating the GNN's performance at each epoch of training, which in turn incurs increased time and computational cost.
To lower the computational cost, SFGC choose to use a Graph Neural Feature Score to evaluate the condensed graph. However, the Graph Neural Feature Score can only work in an extremely low condensation ratio, the greater the condensation ratio, the less pronounced the advantages of the Graph Neural Feature Score become.

Aiming to enhance the efficiency of evaluating the condensed graph, we analyze the root causes of efficiency issues with the original evaluation method. Firstly, since the size of the condensed graph is much smaller compared to the original graph dataset, as we mentioned in our time complexity analysis, the time taken for the condensed graph to forward on a GCN is very short. The majority of the time spent in the original method of training a GNN from scratch was due to testing on the original dataset's test set in each training epoch, to determine the best-performing training epoch for the final performance. 

Therefore, we design a short-term interval training evaluation to assess the performance of the condensed graph. Specifically, we do not require the GNN trained with the condensed graph to reach a well-trained state, but instead, we try to reduce the number of training epochs (e.g., training only for 200 epochs on Ogbn-arxiv) and compare the performance of condensed graphs within the same training epochs. 
Also, during training on condensed graphs, we do not evaluate the GNN at every epoch but do so after a considerable number of training intervals, e.g., an interval of 20 epochs. 
By adopting this approach, we significantly reduce computational time and enhance the efficiency of evaluating the condensed graph. All experiments are conducted five times on one single Nvidia-A100-SXM4-80GB GPU. We provide the running time of our method and SFGC in Fig.~\ref{fig:time}.

\vspace{-5.0pt}
\begin{figure}[ht]
	\centering
    \subfigure[]
     {\includegraphics[width=0.4\textwidth, angle=0]{ 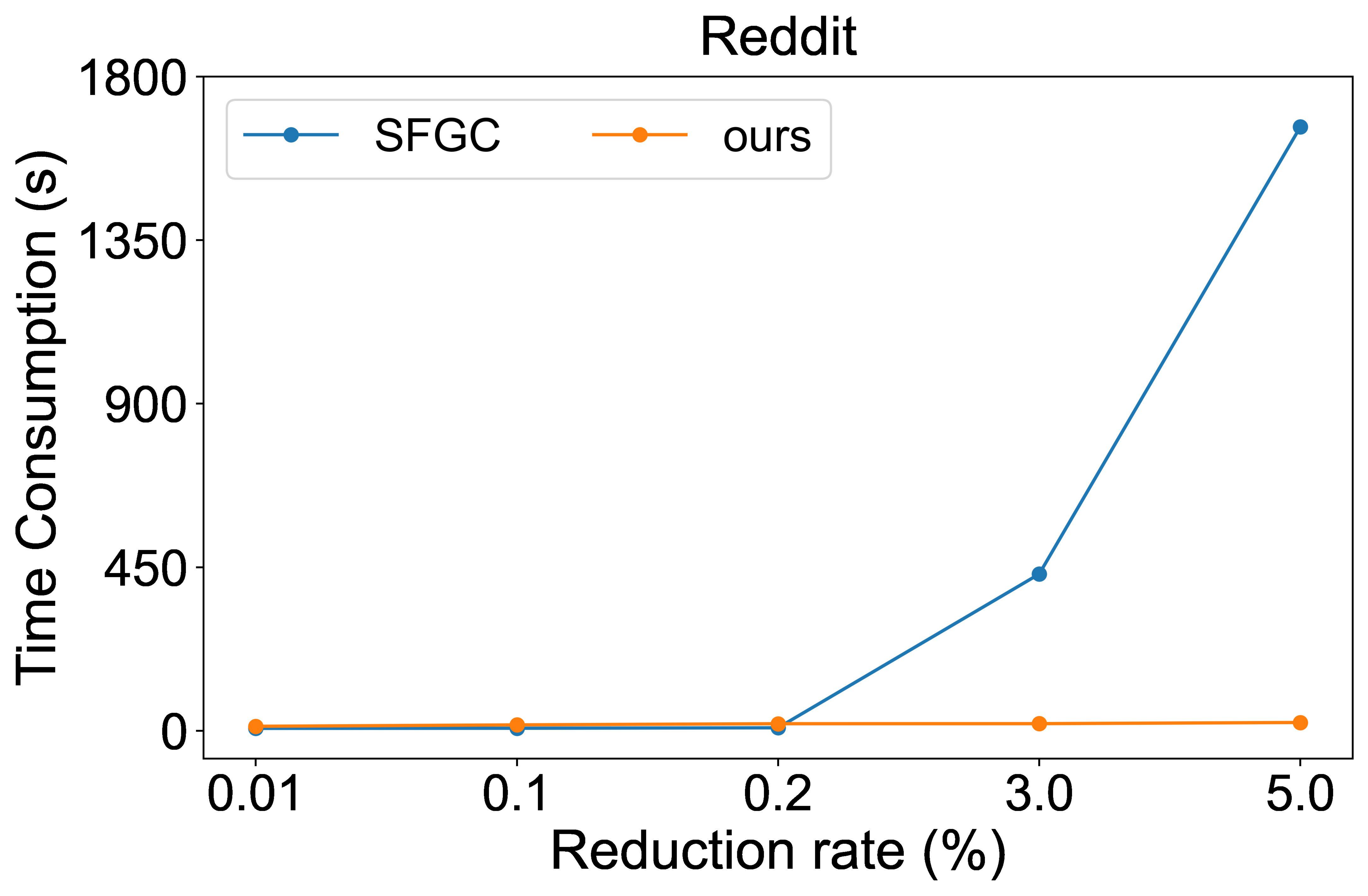}}
     \hspace{5mm}
     \subfigure[]
     {\includegraphics[width=0.4\textwidth, angle=0]{ 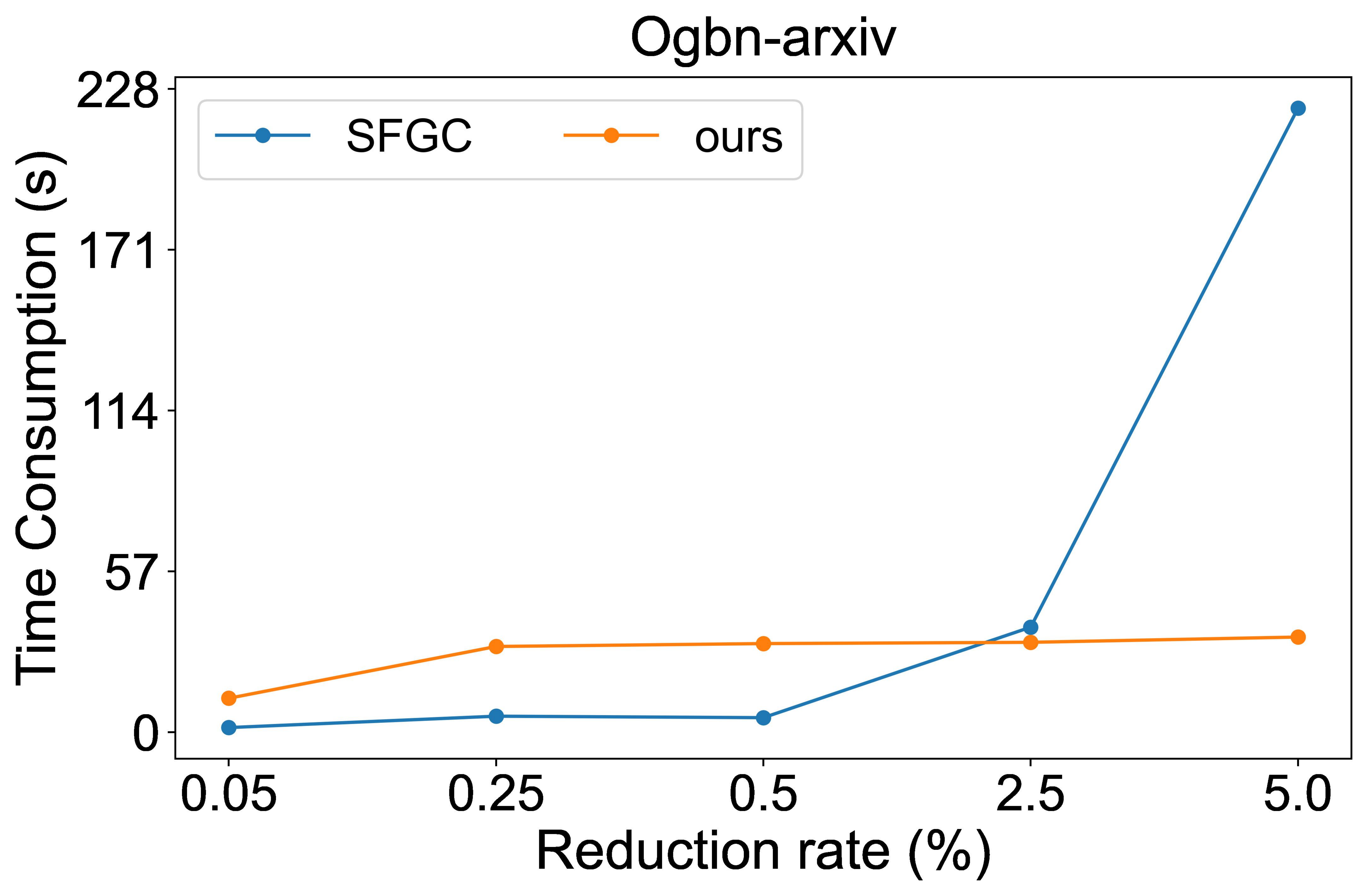}}
    \vspace{-8.0pt}
	\caption{Comparison of methods for evaluating and storing condensed graphs.}\label{fig:time}
    \vspace{-10.0pt}
\end{figure}
\vspace{-3.0pt}

\section{Analysis of Matching Range}
In exploring the effects of different ranges of long-term matching, we present the impacts of various step combinations of $q$ steps (student) and $p$ steps (expert) on the Ogbn-arxiv dataset, with r = 0.5\%. The results, displayed in Fig.~\ref{fig:synstep}, show that the optimal step combination exists for 2100 student steps ($q$) and 1900 expert steps ($p$). Under this setup, the condensed graph exhibits the best node classification performance. Additionally, the quality and expressiveness of the condensed data moderately vary with different step combinations, but the variance is not overly drastic.

Moreover, regarding the different step combinations of $p$ and $q$, we observe that without using soft labels, GEOM exhibits properties similar to SFGC, where the optimal value of $q$ is usually smaller. 
In the choice of $p$, due to the adoption of a curriculum learning approach and expanding window matching, a smaller $p$ can often be set during the condensation. 

In cases where soft labels are used, we find that increasing $q$ under the same $p$ settings generally yields better results. 
One potential reason is that the information in soft labels is more complex compared to hard labels and requires more optimization steps~\cite{guo2023towards}. 
Concurrently, we eliminate unnecessary storage of student model parameters, thereby avoiding excessive memory demands caused by increasing $q$.

\begin{figure}[ht]
	\centering
     {\includegraphics[width=0.45\textwidth, angle=0]{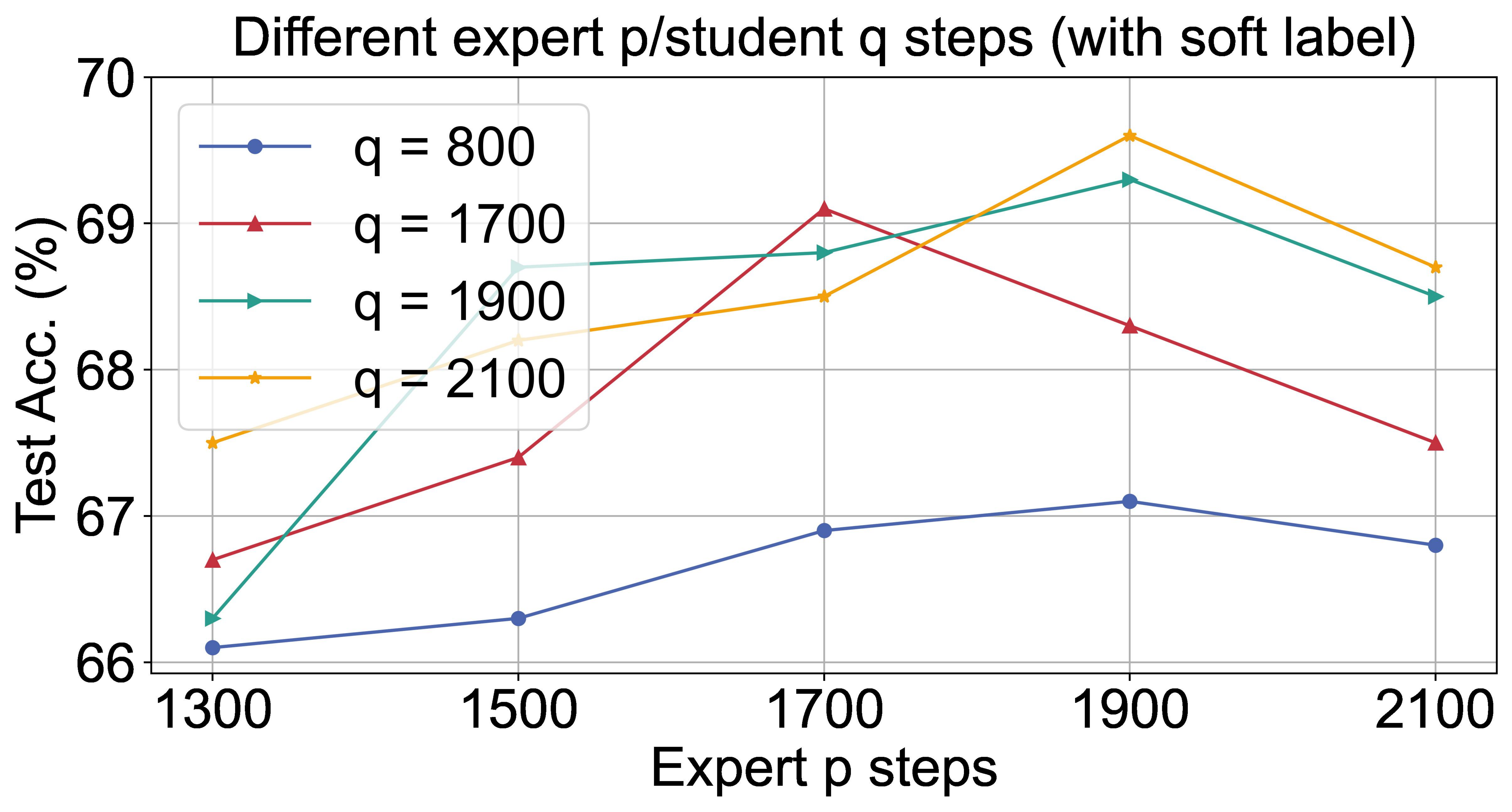}
    \label{fig:soft}}
    \hspace{1mm}
    {\includegraphics[width=0.45\textwidth, angle=0]{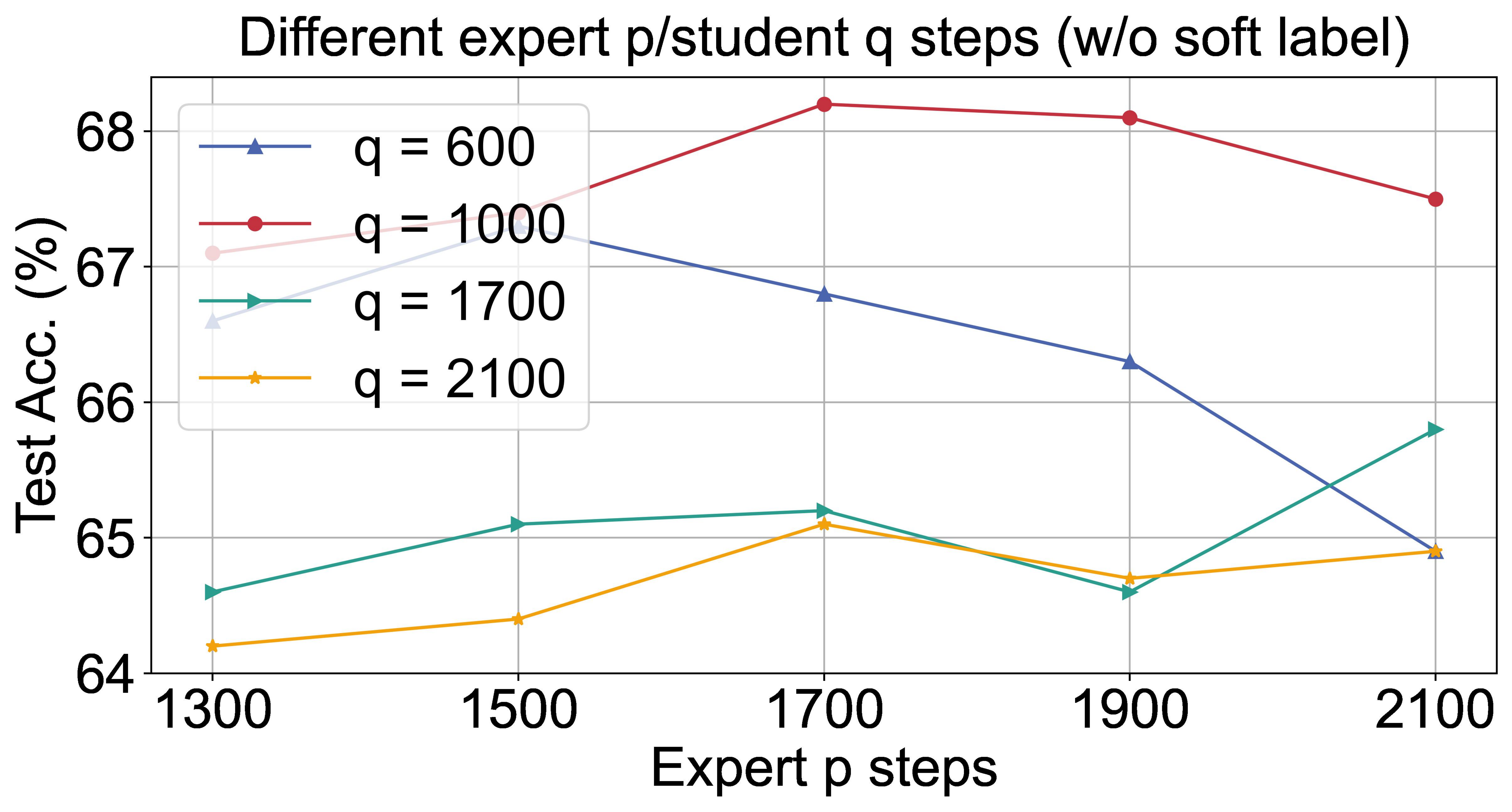}
    \label{fig:hard}}
    \hspace{1mm}
	\caption{Performance with different step combinations of $q$ student steps and expert $p$ steps on
Ogbn-arxiv ($r$ = 0.5\%).}\label{fig:synstep}
    \vspace{-8.0pt}
\end{figure}
\section{Implementation Details}\label{section: details}
For the condensation ratio ($r$) choices, we adhere to the settings from previous studies for smaller datasets such as Cora, Citeseer, and Flickr, where our method effortlessly achieves lossless compression. Specifically, we choose {1.30\%, 2.60\%, 5.20\%} for Cora, {0.90\%, 1.80\%, 3.60\%} for Citeseer, and {0.10\%, 0.50\%, 0.10\%} for Flickr.
However, for the Ogbn-arxiv and Reddit datasets, we found that the previously set condensation ratios were insufficient to involve enough information to get lossless~\cite{jin2022graph}. Therefore, after conducting numerous experiments, we introduce two additional sets of condensation experiment settings for these two datasets. Specifically, we choose {0.10\%, 0.50\%, 1.00\%, 2.50\%, 5.00\%} for Ogbn-arxiv and {0.05\%, 0.10\%, 0.20\%, 3.00\%, 5.00\%} for Reddit.

In the process of training an expert trajectory, we primarily adjust three parameters to control the process of incorporating simple and difficult information: the number of epochs for training on the entire dataset ($\zeta$), the initial proportion of easy nodes ($\lambda$), and the method of gradually adding difficult nodes to the training data (Scheduler).
It is worth noting that our curriculum learning approach can not improve the final performance obviously; rather, it focuses on obtaining trajectories that include clearer and more diverse information from the original graph.

During the \textit{condensation} phase, we build upon SFGC by introducing parameters to control the expanding window and KEE, without specific mention, we adopt a 2-layer GCN with 256 hidden units as the GNN used for condensation. All other parameters remain consistent with those publicly disclosed for SFGC. The specific parameter settings are outlined in Table \ref{tab:paras}, where  ${U^{'}}$ represent the upper bounds of the expanding window, and  ${U}$ denotes the upper limit of the initial expanding window, incremented by one after each condensation iteration. Notably, lr\_y set to 0 indicates the absence of soft labels. An important experimental observation is that omitting early trajectory information across all datasets leads to suboptimal results. Consequently, we set the start of the expanding window to 0 consistently.

In practical implementation, we observe that soft labels can sometimes lead to optimization process instability, especially for certain small-scale condensed datasets. In such experimental scenarios, we use hard labels for the KEE process. 
Therefore, we do not introduce additional loss computations on this dataset.

It is important to highlight that for condensed graphs derived from Reddit and Ogbn-arxiv with condensation ratios greater than 1\%, achieving optimal results requires fewer optimization iterations. A possible explanation is that when the scale of the condensed graph is larger, the gap between it and the original data can be bridged with relatively minor adjustments.

In the selection of methods for evaluating and storing condensed datasets, we don't use the graph kernel-based method (GNTK) proposed by SFGC. This is because as the scale of condensed graphs increases, the computational time for the GNTK metric grows exponentially. When the scale of the condensed dataset is large, the time consumed to compute this metric is about six times that of training a GNN directly with the condensed graph, as illustrated in Fig.~\ref{fig:time}. Noting that to achieve a fairer comparison, we use different random seeds for the evaluation function when choosing the condensed datasets to save during the \textit{condensation} phase and when assessing the performance of the condensed graphs after condensation.

\begin{table*}[ht]
  \centering
  \caption{Hyper-parameters of the overall framework}
  \label{tab:paras}
  \scriptsize
  \begin{tabular}{ccccccccccccc}
    \toprule
    Dataset & Ratio & $\zeta$ & $\lambda$ & Scheduler & $U^{'}$ & $U$  & $p$ & $q$ & lr\_feat & lr\_y & $\alpha$ \\
    \midrule
    \multirow{3}{*}{Citeseer} & 0.90\% & 250 & 0.8 & root & 30 & 20 & 350 & 200 & 0.0001 & 0 & 0.1 \\
    & 1.80\% & 250 & 0.8 & root & 80 & 20 & 350 & 200 & 0.0007 & 0 & 0.05 \\
    & 3.60\% & 250 & 0.8 & root & 30 & 20 & 350 & 400 & 0.0001 & 0 & 0.1 \\
    \midrule
    \multirow{3}{*}{Cora} & 1.30\% & 1500 & 0.75 & geom & 200 & 50 & 1400 & 2500 & 0.0001 & 0.00005 & 0.01 \\
    & 2.60\% & 1500 & 0.75 & geom & 200 & 50 & 1400 & 2500 & 0.0001 & 0.00005 & 0.01 \\
    & 5.20\% & 1500 & 0.75 & geom & 200 & 50 & 1500 & 2500 & 0.0001 & 0.00005 & 0.01 \\
    \midrule
    \multirow{5}{*}{ogbn-arxiv} & 0.05\% & 1200 & 0.85 & root & 50 & 30 & 1100 & 650 & 0.25 & 0 & 0 \\
    & 0.25\% & 1200 & 0.85 & root & 200 & 100 & 1600 & 2100 & 0.05 & 0.001 & 0 \\
    & 0.50\% & 1200 & 0.85 & root & 200 & 100 & 1900 & 2100 & 0.03 & 0.001 & 0 \\
    & 2.50\% & 1200 & 0.85 & root & 350 & 300 & 1600 & 2200 & 0.03 & 0.001 & 0 \\
    & 5\% & 1200 & 0.85 & root & 400 & 300 & 1500 & 2000 & 0.05 & 0.001 & 0 \\
    \midrule
    \multirow{3}{*}{Flickr} & 0.10\% & 100 & 0.95 & root & 30 & 10 & 600 & 600 & 0.07 & 0 & 0.3 \\
    & 0.50\% & 100 & 0.95 & root & 30 & 1 & 600 & 300 & 0.01 & 0 & 0.1 \\
    & 1\% & 100 & 0.95 & root & 70 & 10 & 70 & 300 & 0.07 & 0 & 0.3 \\
    \midrule
    \multirow{5}{*}{Reddit} & 0.05\% & 800 & 0.9 & linear & 50 & 1 & 800 & 800 & 0.02 & 0 & 0.25 \\
    & 0.10\% & 800 & 0.9 & linear & 20 & 1 & 1000 & 1000 & 0.03 & 0 & 0.1 \\
    & 0.20\% & 800 & 0.9 & linear & 20 & 1 & 900 & 800 & 0.2 & 0 & 0.25 \\
    & 3\% & 800 & 0.9 & linear & 250 & 50 & 900 & 1300 & 0.001 & 0.0001 & 0.2 \\
    & 5\% & 800 & 0.9 & linear & 200 & 1 & 1000 & 1300 & 0.001 & 0.0001 & 0.25 \\
    \bottomrule
  \end{tabular}%
\end{table*}

\section{Visualizations}
\label{appendix_vis}
We showcase t-SNE plots depicting the condensed graph-free data generated by GEOM across all datasets. Our condensed graph-free data reveals a well-clustered pattern across Cora and Citeseer. Furthermore, larger-scale datasets exhibit some implicit clusters within the same class. This indicates that our approach effectively learns representative representations from the easy nodes of the original data while efficiently utilizing the difficult nodes. With the assistance of difficult nodes, the patterns become enriched.

\begin{figure}[ht]
	\centering
    \subfigure[Flickr, $r=1\%$]
    {\includegraphics[width=0.4\textwidth, angle=0]{ 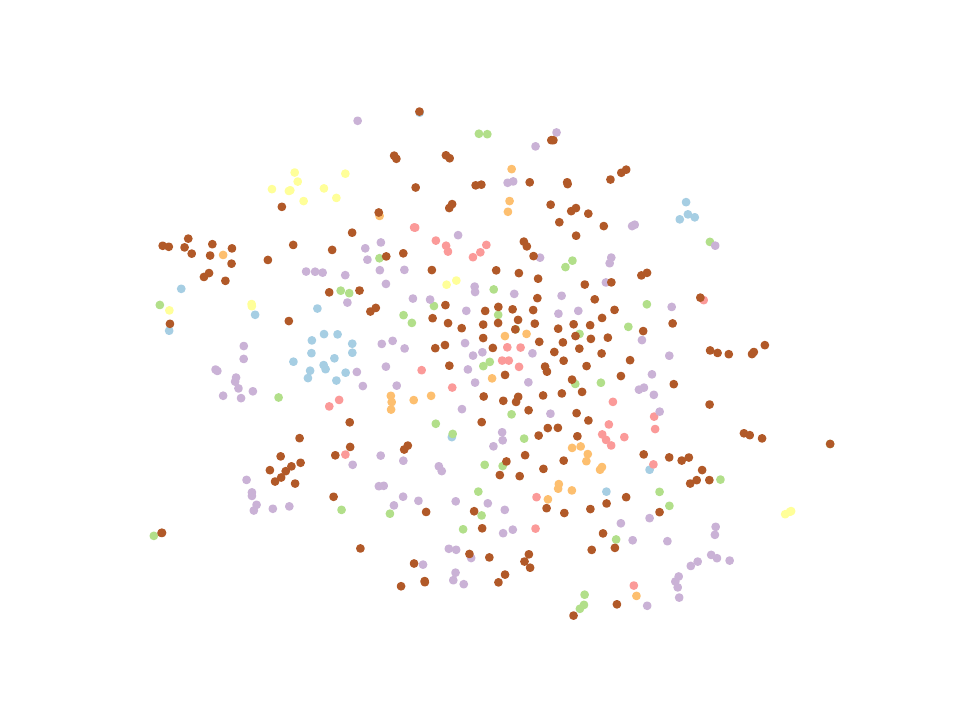} 
  }
    \hspace{1mm}
        \subfigure[Ogbn-arxiv, $r=0.5\%$]
    {\includegraphics[width=0.4\textwidth, angle=0]{ 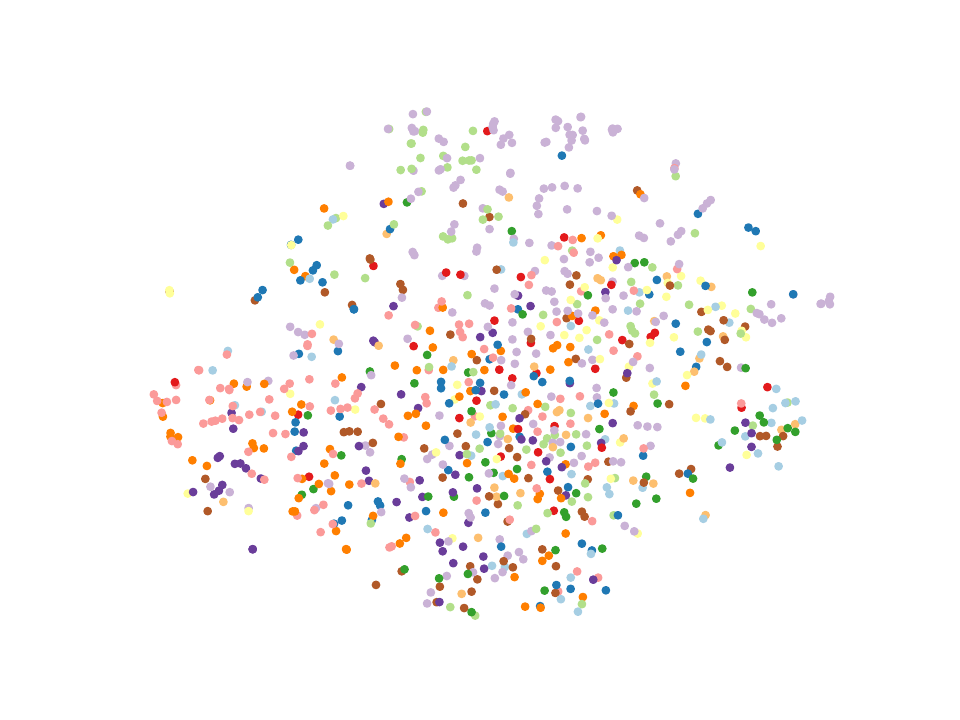} 
   }
    \hspace{1mm}
    \subfigure[Ogbn-arxiv, $r=5\%$]
    {\includegraphics[width=0.4\textwidth, angle=0]{ 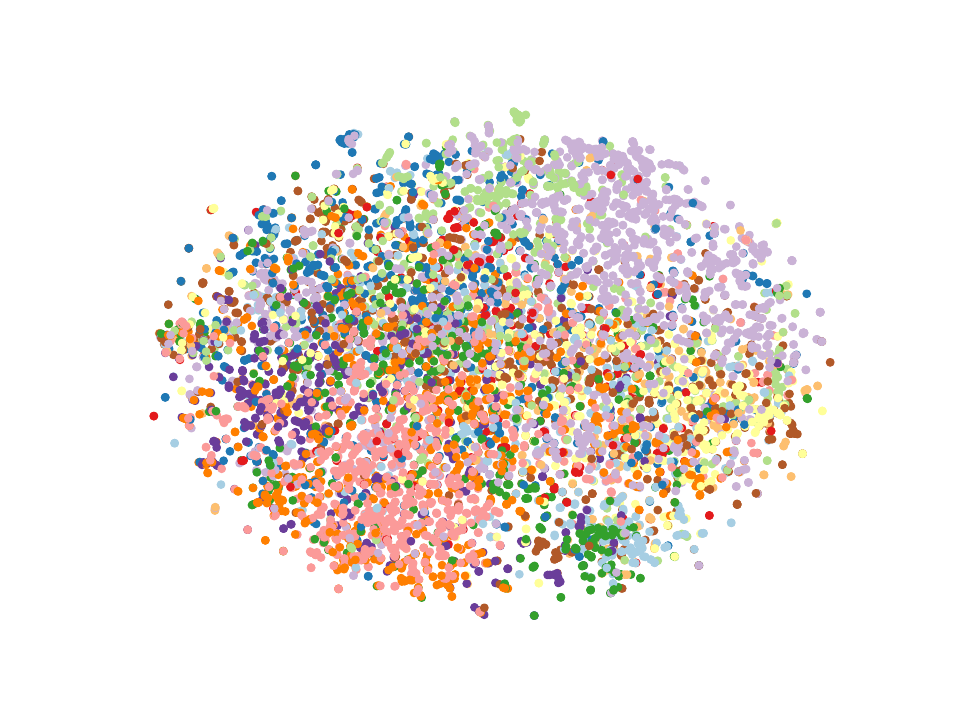} 
 }
    \hspace{1mm}
        \subfigure[Reddit, $r=5\%$]
    {\includegraphics[width=0.4\textwidth, angle=0]{ 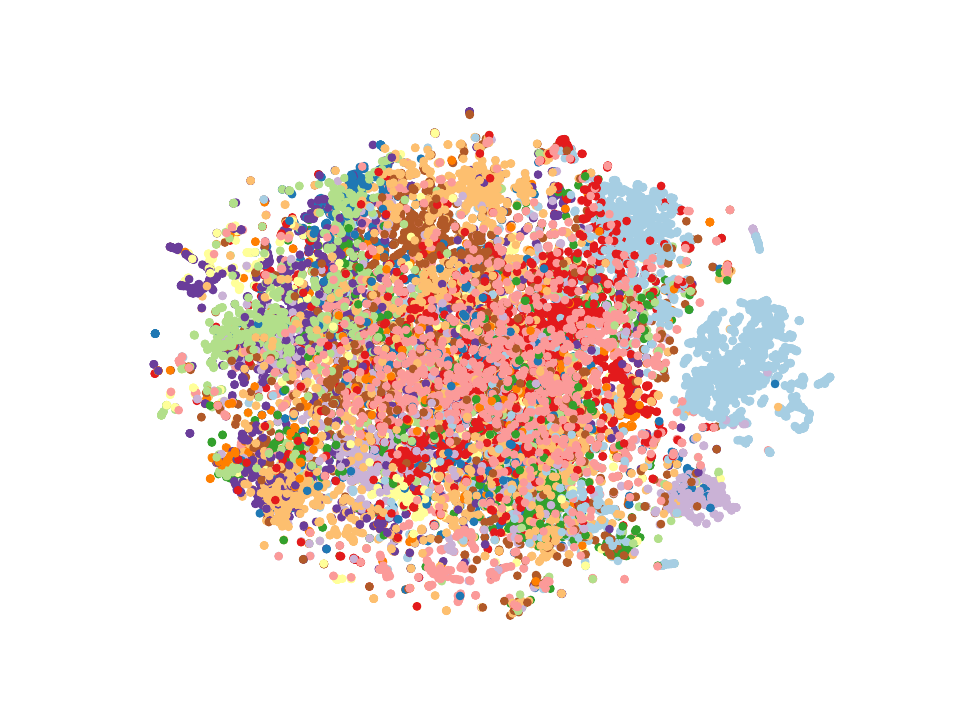} 
 }
    \hspace{1mm}
	\caption{Visualization of t-SNE on condensed graphs}\label{fig:vis_syngraph_all}
    \vspace{-8.0pt}
\end{figure}


\end{document}